\DeclareMathOperator{\E}{\mathbb{E}}
\DeclareMathOperator{\pd}{\succ_\text{p}}
\DeclareMathOperator{\pde}{\succeq_\text{p}}
\DeclareMathOperator{\pdef}{\preceq_\text{p}}
\DeclareMathOperator{\fsd}{\succ_\text{FSD}}
\DeclareMathOperator{\fsde}{\succeq_\text{FSD}}
\DeclareMathOperator{\du}{\succ_\text{d}}
\DeclareMathOperator{\pf}{\text{PF}(\Pi)}
\DeclareMathOperator{\ch}{\text{CH}(\Pi)}
\DeclareMathOperator{\duset}{\text{DUS}(\Pi)}
\DeclareMathOperator{\cduset}{\text{CDUS}(\Pi)}
\DeclareMathOperator{\vpi}{\mathbf{V}^{\pi}}
\DeclareMathOperator{\vpip}{\mathbf{V}^{\pi'}}
\DeclareMathOperator{\zpi}{\mathbf{Z}^{\pi}}
\DeclareMathOperator{\zpip}{\mathbf{Z}^{\pi'}}
\newtheorem{theorem}{Theorem}[section]
\newtheorem*{theorem31}{Theorem 3.1}
\newtheorem*{theorem32}{Theorem 3.2}
\newtheorem*{theorem51}{Theorem 5.1}
\newtheorem{corollary}{Corollary}[theorem]
\newtheorem*{corollary411}{Corollary 4.1.1}
\newtheorem*{corollary511}{Corollary 5.1.1}
\newtheorem{proposition}{Proposition}[theorem]
\newtheorem{lemma}[theorem]{Lemma}
\newtheorem*{lemma41}{Lemma 4.1}
\theoremstyle{definition}
\newtheorem{definition}{Definition}[section]
\newtheorem{example}{Example}
\title{Distributional Multi-Objective Decision Making}
\author{
Willem Röpke$^1$\and
Conor F. Hayes$^2$\and
Patrick Mannion$^2$\and
Enda Howley$^{2}$\and
Ann Nowé$^1$\And
Diederik M. Roijers$^{1,3}$
\affiliations
$^1$Vrije Universiteit Brussel, Brussels, Belgium\\
$^2$University of Galway, Galway, Ireland\\
$^3$City of Amsterdam, Amsterdam, The Netherlands \\
\emails
willem.ropke@vub.be, c.hayes13@nuigalway.ie, patrick.mannion@universityofgalway.ie, enda.howley@universityofgalway.ie, ann.nowe@vub.be, diederik.roijers@vub.be
}
\begin{document}

\maketitle

\begin{abstract}
For effective decision support in scenarios with conflicting objectives, sets of potentially optimal solutions can be presented to the decision maker. We explore both what policies these sets should contain and how such sets can be computed efficiently. With this in mind, we take a distributional approach and introduce a novel dominance criterion relating return distributions of policies directly. Based on this criterion, we present the distributional undominated set and show that it contains optimal policies otherwise ignored by the Pareto front. In addition, we propose the convex distributional undominated set and prove that it comprises all policies that maximise expected utility for multivariate risk-averse decision makers. We propose a novel algorithm to learn the distributional undominated set and further contribute pruning operators to reduce the set to the convex distributional undominated set. Through experiments, we demonstrate the feasibility and effectiveness of these methods, making this a valuable new approach for decision support in real-world problems.
\end{abstract}

\section{Introduction}
Multi-objective sequential decision making is a complex process that involves trade-offs between multiple, often conflicting, objectives. As the preferences over these objectives are typically not known a priori, it is challenging to find a single optimal solution, and instead, a set of solutions that are considered optimal can be presented to the decision maker \cite{roijers2013survey}. To keep decision support tractable, it is necessary to reduce the size of the solution sets as much as possible. Therefore, defining appropriate solution sets that do not retain excess policies while guaranteeing that no concessions are made to optimality, as well as designing corresponding pruning algorithms is essential \cite{taboada2007practical}.

A solution set that is often considered appropriate in both multi-objective decision making and multi-objective optimisation is the Pareto front \cite{roijers2013survey}. The Pareto front consists of the policies that lead to Pareto optimal expected payoffs and thus contains all policies which are optimal for decision makers interested in optimising the utility from these expected returns \cite{hayes2022practical}. However, it is known that the Pareto front does not necessarily contain all optimal policies for problems where the decision maker optimises for their expected utility instead \cite{hayes2022expected}. 

To address this limitation, we introduce a novel dominance criterion, called distributional dominance, relating the multivariate return distribution between policies directly. Distributional dominance relies on first-order stochastic dominance, which is known to imply greater expected utility for univariate distributions \cite{fishburn1974convex,bawa1985determination}, and has also been explored for multi-variate distributions \cite{denuit2013multivariate,levy2016bivariate}. Based on distributional dominance, we propose the \emph{distributional undominated set (DUS)} as a novel solution set and show that it contains all optimal policies for the class of multivariate risk-averse decision makers defined by Richard~\shortcite{richard1975multivariate}. Furthermore, we show that it is a superset of the Pareto front and as a result is a suitable starting set which can be further pruned to smaller subsets for specific scenarios. 

While the DUS contains no distributionally dominated policies, it may still contain policies which will never be chosen in the expected utility setting. Therefore, we introduce a second solution set, the \emph{convex distributional undominated set (CDUS)}, which includes only those policies that are undominated by a mixture of policies in the DUS. We find that the CDUS is a subset of the DUS and contains all optimal policies for multivariate risk-averse decision makers. While in general the CDUS and the Pareto front do not coincide, both sets are shown to include the convex hull. 

From a computational perspective, we contribute algorithms to prune a set of policies to its DUS or CDUS. As these pruning methods rely on the quality of the input set, we present an extension of the Pareto Q-learning algorithm \cite{vanmoffaert2014multiobjective} to learn return distributions and only discard those policies that are not in the DUS. We evaluate our approach on randomly generated MOMDPs of different sizes and compare the sizes of the resulting sets after pruning. As our goal is to use these sets in a decision support scenario, keeping their sizes reasonable and algorithms tractable both in terms of runtime and memory enables decision makers to efficiently select their preferred policy\footnote{A full version with supplementary material is available online at \url{https://arxiv.org/abs/2305.05560}}.  

\section{Background}
\subsection{Multi-Objective Decision Making}
Sequential decision making is often formalised using Markov Decision Processes (MDPs) which provide a mathematical framework for modelling settings in which an agent must choose an action at each time step based on the current state of the system. To address real-world situations where decision makers must consider multiple conflicting objectives, MDPs can be generalised to Multi-Objective Markov Decision Processes (MOMDPs) which allow for vectorial reward functions \cite{roijers2017multiobjective}.

\begin{definition}
A multi-objective Markov decision process is a tuple $M = (\mathcal{S}, \mathcal{A}, T, \gamma, \mathbf{R})$, with $d \geq 1$ objectives, where:
\begin{itemize}
\item $\mathcal{S}$ is the state space;
\item $\mathcal{A}$ is the set of  actions
\item $T \colon \mathcal{S} \times \mathcal{A} \times \mathcal{S} \to \left[ 0, 1 \right]$ is the transition function;
\item $\gamma \in [0, 1]$ is the discount factor;
\item $\mathbf{R} \colon \mathcal{S} \times \mathcal{A} \times \mathcal{S} \to \mathbb{R}^d$ is the vectorial reward function.
\end{itemize} 
\end{definition}
In a MOMDP, a decision maker takes sequential actions by means of \emph{policy} $\pi: \mathcal{S} \times \mathcal{A} \to [0, 1]$ which maps state-action pairs to a probability. We denote the set of all policies by $\Pi$. 

We take a distributional approach \cite{bellemare2023distributional,hayes2022decision} and consider the multivariate return distributions of these policies. The return $\zpi = \left(Z_1^{\pi}, \dotsc, Z_d^{\pi}\right)^\text{T}$ is a random vector where each $Z_i^{\pi}$ is the marginal distribution of the $i$'th objective such that, 
\begin{equation}
\E \left[\zpi\right] = \E \left[\sum_{t=0}^\infty \gamma^t \mathbf{r}_t \mid \pi, \mu_0\right] = \left(\E \left[Z_1^{\pi} \right], \dotsc, \E \left[Z_d^{\pi}\right] \right)^\text{T}.
\end{equation}
For notational simplicity, when considering the expected returns directly we will write this as $\vpi = \left(V_1^{\pi}, \dotsc, V_d^{\pi}\right)^\text{T}$.

\subsection{Dominance Relations}
\label{sec:dominance_relations}
Multi-objective decision making presents additional complexity compared to traditional decision making, as it is not possible to completely order the return of different policies. Pareto dominance introduces a partial ordering by considering a vector dominant when it is greater or equal for all objectives and strictly greater for at least one objective. We say a policy Pareto dominates a second policy when the expected value of its return distribution is Pareto dominant. 
\begin{definition}
\label{def:pareto-dominance}
Let $\pi, \pi' \in \Pi$. Then $\pi$ Pareto dominates $\pi'$, denoted by $\vpi \pd \vpip$, when $\forall i, V_i^\pi \geq V_i^{\pi'} \land \exists i, V_i^\pi > V_i^{\pi'}.$
\end{definition}
When the expected return of $\pi$ is equal to $\pi'$ or Pareto dominates it, we denote this by $\vpi \pde \vpip$.

First-order stochastic dominance (FSD) is a well-known dominance criterion from decision theory and economics, which relates return distributions directly \cite{levy2016stochastic,denuit2013multivariate}. Let $F_\mathbf{X}(\mathbf{x}) = P(\mathbf{X} \pdef \mathbf{x})$ be the cumulative distribution function (CDF) of a random vector $\mathbf{X}$, denoting the probability that the random vector takes on a value Pareto dominated or equal to $\mathbf{x}$. Informally, we say that $\mathbf{X}$ FSD another distribution $\mathbf{Y}$ when it always has a higher probability of obtaining Pareto dominant returns. 
\begin{definition}
\label{def:fsd}
A policy $\pi$ first-order stochastically dominates another policy $\pi'$, denoted by $\zpi \fsde \zpip$, when,
\begin{equation*}
    \forall \mathbf{v} \in \mathbb{R}^d: F_{\zpi}(\mathbf{v}) \leq F_{\zpip}(\mathbf{v}).
\end{equation*}
\end{definition}

\subsection{The Utility-Based Approach}
\label{sec:utility_based_approach}
We take a utility-based approach to multi-objective decision making \cite{roijers2013survey} and assume that for any decision maker a utility function $u: \mathbb{R}^d \to \mathbb{R}$ exists that represents their preferences over the objectives. We consider the class of strictly monotonically increasing utility functions, denoted by $\mathcal{U}$. Intuitively, such utility functions imply that any decision maker prefers more of each objective, given all else equal.
\begin{definition}
\label{def:strictly-increasing}
A function $f: \mathbb{R}^d \to \mathbb{R}$ is called strictly monotonically increasing if,
\begin{equation*}
    \forall \mathbf{x}, \mathbf{y} \in \mathbb{R}^d: \mathbf{x} \pd \mathbf{y} \implies f(x) > f(y).
\end{equation*}
\end{definition}

In the utility-based approach, there is often a need to optimise for an entire class of decision makers or a decision maker for which we do not know the exact utility function. In this case, it is necessary to identify a set of policies that contain an optimal policy for all possible utility functions. A further complication arises from the fact that different optimality criteria exist depending on how the utility is derived \cite{roijers2013survey}. For scenarios where a decision maker's utility is derived from multiple executions of a policy, the scalarised expected returns (SER) criterion can be optimised,
\begin{equation}
    V_{u}^{\pi} = u\left(\mathbb{E} \left[ \sum\limits^\infty_{t=0} \gamma^t {\bf r}_t \:|\: \pi, \mu_0 \right]\right).
    \label{eqn:ser}
\end{equation}
Alternatively, it is possible that the decision maker only executes their policy once and therefore aims to optimise their expected utility. In the utility-based approach, this is known as the expected scalarised returns (ESR) criterion, 
\begin{equation}
\label{eqn:esr}
V_{u}^{\pi} = \mathbb{E} \left[ u\left( \sum\limits^\infty_{t=0} \gamma^t {\bf r}_t \right) \:|\: \pi, \mu_0 \right].
\end{equation}
It is well-established that, in general, optimal policies under one criterion need not be optimal under the other criterion \cite{roijers2013survey,vamplew2022impact}. 



\subsection{Solution Sets}
One of the most common solution sets in the literature is the Pareto front (PF), formally defined in \cref{def:pareto-front} \cite{roijers2017multiobjective}. We stress that this solution set is presented in the context of the SER criterion as it is based on the expected returns of the policies. 

\begin{definition}
\label{def:pareto-front}
The Pareto front is the set of all policies that are not Pareto dominated:
\begin{equation}
    \pf = \left\{ \pi \in \Pi \mid \nexists \pi' \in \Pi, \vpip \pd \vpi \right\}.
\end{equation}
\end{definition}

A second solution set that is often considered is the convex hull (CH) which contains all policies that are optimal under linear utility functions and is therefore applicable under both SER and ESR \cite{hayes2022practical}. Additionally, when stochastic policies are allowed, the convex hull can be used to construct all Pareto optimal policies \cite{vamplew2009constructing}.

\begin{definition}
\label{def:convex-hull}
The convex hull is the set of all policies that are not Pareto dominated by a convex combination of other policies,
\begin{equation}
    \ch = \left\{\pi \in \Pi \mid \nexists \lambda \in \Delta^{|\Pi|}: \sum_{i=1}^{|\Pi|} \lambda_i \mathbf{V}^{\pi_i} \pd \vpi\right\}.
\end{equation}
\end{definition}

We note that solution sets based on return distributions have also been considered, with for example the ESR set \cite{hayes2022expected}. In this work, we extend this line of research and provide additional theoretical and computational results.

\section{Distributional Decision Making}
While most of multi-objective decision making focuses on returning the Pareto front, we demonstrate that this does not cover the full range of optimal policies. Specifically, for decision makers optimising their expected utility, the best policy in the Pareto front may still be significantly worse than a Pareto dominated policy. To overcome this, we propose a novel dominance criterion and subsequently construct a solution set based on this criterion.

\subsection{Motivation}
To understand why it is necessary to construct these novel solution sets, and in particular why a distributional approach is appropriate, it is helpful to consider a motivating example.

\begin{example}
\label{exmp:utility}
Imagine a hospital patient needing to decide on a treatment plan with their doctor. Their objectives are to maximise the efficacy of the treatment, denoted $v_1$, while also maximising their comfort (i.e. minimise the side-effects), denoted $v_2$. Unfortunately, these objectives are conflicting. In previous discussions with their doctor, the patient mentioned that they wish to strike a balance between the two. A fitting utility function is the product between the two objectives (\cref{eq:utility-function}) as it is maximised when values are closer together.
\begin{equation}
\label{eq:utility-function}
u(v_1, v_2) = v_1 \cdot v_2
\end{equation}
The doctor then proposes the following two treatment plans. 
\begin{equation*}
\begin{split}
    A & = \left\{P(v_1=1, v_2=0) = \frac{1}{2}, P(v_1=0, v_2=1) = \frac{1}{2}\right\}\\
    B & = \left\{P(v_1=0.45, v_2=0.45) = 1\right\},\\
\end{split}
\end{equation*}
with $\E [A] = (0.5, 0.5)$ and $\E [B] = (0.45, 0.45)$.

When taking the standard approach and applying Pareto dominance, it is clear that the expected return of $A$ dominates that of $B$. In contrast, when considering the distributions on the basis of expected utility, $A$ has an expected utility of 0, while $B$ has an expected utility of 0.2025. As the patient will most likely follow the treatment plan only once, they aim to optimise their expected utility and thus prefer distribution $B$.
\end{example}

As this example shows, it is pertinent to consider exactly what the decision maker aims to optimise for: do they optimise for repeated execution of the same policy, or maximising the expected utility from one execution? In the former case, they may well decide based on the expected value of the distribution. In the latter case, however, taking the full distribution of returns into account is key to effective decision support.

\subsection{Distributional Dominance}
To address the limitations of Pareto dominance, we introduce the \emph{distributional dominance} criterion. This criterion states that a distribution dominates another when it is first-order stochastic dominant and at least one of the marginal distributions \emph{strictly} first-order stochastic dominates the related marginal distribution of the second distribution. 

\begin{definition}
\label{def:dist-dom}
A policy $\pi$ distributionally dominates another policy $\pi'$, denoted by $\zpi \du \zpip$, when,
\begin{equation*}
    \zpi \fsde \zpip \land \exists i \in [d]: Z_i^\pi \fsd Z_i^{\pi'}.
\end{equation*}
\end{definition}

One can verify that distributional dominance is equivalent to strict first-order stochastic dominance in the case of random vectors when all variables are independent. In general, however, distributional dominance is a stronger condition than strict first-order stochastic dominance as the condition on the marginal distributions implies strict FSD but is not implied by it. Defining distributional dominance as such enables us to guarantee a strictly greater expected utility for a large class of decision makers and leads to the general solution set discussed in \cref{sec:solution-set}.

For the class of decision makers with utility functions in $\mathcal{U}$, we show that when a given random vector has \emph{strictly} greater expected utility for all utility functions than a second random vector, this implies distributional dominance.

\begin{theorem}
\label{th:u-implies-dd}
Let $\mathbf{X}$ and $\mathbf{Y}$ be d-dimensional random vectors. Then,
\begin{equation*}
    \forall u \in \mathcal{U}: \E u(\mathbf{X}) > \E u(\mathbf{Y}) \implies \mathbf{X} \du \mathbf{Y}.
\end{equation*}
\end{theorem}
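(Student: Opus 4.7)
My plan is to prove the contrapositive: if $\mathbf{X} \not\du \mathbf{Y}$, then I will exhibit a utility $u \in \mathcal{U}$ with $\E u(\mathbf{X}) \leq \E u(\mathbf{Y})$. By \cref{def:dist-dom}, the negation of distributional dominance splits into two cases: either (A) $\mathbf{X} \not\fsde \mathbf{Y}$, or (B) $\mathbf{X} \fsde \mathbf{Y}$ holds but no marginal satisfies $X_i \fsd Y_i$. I would handle these separately.

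For case (A), I appeal to the classical multivariate FSD equivalence: $\mathbf{X} \fsde \mathbf{Y}$ holds if and only if $\E u_0(\mathbf{X}) \geq \E u_0(\mathbf{Y})$ for every bounded test function $u_0$ that is non-decreasing in each coordinate. Since weak FSD fails by hypothesis, this supplies some such $u_0$ with $\E u_0(\mathbf{X}) < \E u_0(\mathbf{Y})$. The function $u_0$ is only non-decreasing and hence not yet in $\mathcal{U}$, so I would upgrade it by adding a small strictly increasing correction: define $u(\mathbf{x}) = u_0(\mathbf{x}) + \epsilon \sum_{j=1}^d x_j$, which lies in $\mathcal{U}$ for any $\epsilon > 0$. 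Provided the first moments of $\mathbf{X}$ and $\mathbf{Y}$ are finite (implicit throughout the paper via \cref{eqn:ser,eqn:esr}), choosing $\epsilon$ sufficiently small preserves the strict inequality, yielding $\E u(\mathbf{X}) < \E u(\mathbf{Y})$ as required.

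For case (B), marginalising \cref{def:fsd} by sending the joint CDF to infinity in all coordinates except the $i$-th gives $X_i \fsde Y_i$ for every $i$. Combined with the hypothesis that strict FSD fails on every marginal, this forces $X_i$ and $Y_i$ to be equal in distribution for each $i$. I then pick any strictly increasing $g: \mathbb{R} \to \mathbb{R}$ and form the separable utility $u(\mathbf{x}) = \sum_{i=1}^d g(x_i) \in \mathcal{U}$. Its expectation decomposes into marginal expectations, which coincide between $\mathbf{X}$ and $\mathbf{Y}$, so $\E u(\mathbf{X}) = \E u(\mathbf{Y})$, contradicting the assumed strict inequality.

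The main obstacle is case (A): one must invoke the multivariate analogue of the FSD/monotone-utility duality and then verify that the linear perturbation does not flip the sign of the expectation gap, which boils down to continuity of $\epsilon \mapsto \E u(\mathbf{X}) - \E u(\mathbf{Y})$ under suitable integrability. Case (B) is comparatively routine once one notes that joint weak FSD restricts to weak FSD on marginals and that a separable utility suffices when all marginals already agree in distribution.
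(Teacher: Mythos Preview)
Your proof is correct and follows essentially the same route as the paper: the paper first isolates your case~(A) as a standalone lemma (constructing a smooth strictly-increasing approximation to the step function at the offending point~$\mathbf{v}$, rather than perturbing a non-decreasing indicator by $\epsilon\sum_j x_j$), and then handles your case~(B) with exactly the same separable-utility device $u(\mathbf{x})=\sum_i u_i(x_i)$, after recording as a second lemma that joint $\fsde$ restricts to marginal $\fsde$. One small remark: the biconditional you invoke in case~(A) is not actually an equivalence for the paper's CDF-based $\fsde$ --- in dimension $d\ge 2$ the lower orthant order does \emph{not} imply $\E u_0(\mathbf{X})\ge\E u_0(\mathbf{Y})$ for every coordinatewise non-decreasing $u_0$ --- but your argument only uses the contrapositive of the valid implication, so the proof is unaffected.
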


\begin{proofsketch}
We first show an additional lemma stating that the condition implies first-order stochastic dominance. Therefore, the proof reduces to showing the condition on the marginals. It suffices to show that if $\mathbf{X}$ does not distributionally dominate $\mathbf{Y}$, it is always possible to construct a utility function for which $\E u(\mathbf{Y})$ is at least as high as $\E u(\mathbf{X})$. 
\end{proofsketch}

In practice, it is impossible to verify whether the expected utility of a given random vector is always strictly greater than that of a second random vector. On the other hand, we will demonstrate that it is computationally feasible to verify distributional dominance (see \cref{sec:computing-dus}). We now show that distributional dominance implies a strictly greater expected utility for a subset of utility functions in $\mathcal{U}$. The condition we impose is referred to as ``multivariate risk-aversion", which means that a decision maker in this class will, when confronted with a choice between two lotteries, always avoid the lottery containing the worst possible outcome \cite{richard1975multivariate}. Below, we present the theorem and proof for bivariate distributions. We note that for FSD this property has been shown to hold for $n$-dimensional random vectors as well \cite{scarsini1988dominance}.

\begin{theorem}
\label{th:dd-implies-u}
Let $\mathbf{X}$ and $\mathbf{Y}$ be two-dimensional random vectors. Then $\forall u \in \mathcal{U}$ with $\frac{\partial^2 u(x_1, x_2)}{\partial x_1 \partial x_2} \leq 0$,
\begin{equation*}
    \mathbf{X} \du \mathbf{Y} \implies \E u(\mathbf{X}) > \E u(\mathbf{Y}) .
\end{equation*} 
\end{theorem}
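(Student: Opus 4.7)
The plan is to reduce the claim to a Hoeffding-type integration-by-parts identity for bivariate expectations. Assuming for regularity that $u \in C^2$ and that $\mathbf{X}, \mathbf{Y}$ are supported on a rectangle $[a,b]^2$ (the general case following by truncation and smooth approximation), one obtains
\begin{align*}
\E[u(\mathbf{X})] - \E[u(\mathbf{Y})]
&= \int_a^b u_1(s,b)\bigl(F_{Y_1}(s)-F_{X_1}(s)\bigr)\,ds \\
&\quad + \int_a^b u_2(b,t)\bigl(F_{Y_2}(t)-F_{X_2}(t)\bigr)\,dt \\
&\quad + \iint u_{12}(s,t)\bigl(F_{\mathbf{X}}(s,t)-F_{\mathbf{Y}}(s,t)\bigr)\,ds\,dt.
\end{align*}
I would derive this by writing $u(x_1,x_2) = u(b,b) - \int_{x_1}^b u_1(s,b)\,ds - \int_{x_2}^b u_2(b,t)\,dt + \int_{x_1}^b\!\int_{x_2}^b u_{12}(s,t)\,dt\,ds$, taking expectations, and swapping the order of integration via Fubini.

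Given this decomposition, I would observe that each integrand is pointwise non-negative. Indeed, $\mathbf{X} \fsde \mathbf{Y}$ yields $F_\mathbf{X} \leq F_\mathbf{Y}$ pointwise and, by sending the other coordinate to $b$, the marginal inequalities $F_{X_i} \leq F_{Y_i}$. Combined with $u$ strictly monotonically increasing (so $u_1, u_2 \geq 0$) and the risk-aversion hypothesis $u_{12} \leq 0$, this gives $\E u(\mathbf{X}) \geq \E u(\mathbf{Y})$, recovering the known multivariate FSD implication.

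For the strict inequality I would invoke the second conjunct of distributional dominance: some marginal $i^*$ satisfies $X_{i^*} \fsd Y_{i^*}$, so by right-continuity of CDFs there is an interval $[v^*, v^* + \delta)$ on which $F_{Y_{i^*}} - F_{X_{i^*}} > 0$. Applying \cref{def:strictly-increasing} to pairs of points differing only in the $i^*$th coordinate shows that $u(\cdot, b)$ is strictly increasing along that fiber, so $\int_{v^*}^{v^*+\delta} u_{i^*}(\cdot, b)\,dx > 0$ and therefore $u_{i^*}(\cdot, b) > 0$ on a subset of positive Lebesgue measure inside $[v^*, v^*+\delta)$. The corresponding boundary integral is then strictly positive while the other two remain non-negative, and we conclude $\E u(\mathbf{X}) > \E u(\mathbf{Y})$.

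The main obstacle will be the regularity and approximation step: the identity as stated presupposes $C^2$ smoothness of $u$ and compact support of the random vectors, and one must lift both to the general setting while preserving the sign conditions on $u_1, u_2, u_{12}$. A secondary subtlety is translating the global condition of \cref{def:strictly-increasing} into pointwise positivity of $u_{i^*}(\cdot, b)$ on a positive-measure subset of the strict-FSD interval, which is handled by the one-dimensional argument above.
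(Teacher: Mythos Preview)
Your argument is correct and follows essentially the same route as the paper: both proofs rest on the bivariate integration-by-parts (Hoeffding-type) decomposition, use $u_{12}\le 0$ together with $F_{\mathbf X}\le F_{\mathbf Y}$ to make the joint term non-negative, and then extract the strict inequality from the marginal boundary term corresponding to the strictly FSD coordinate. The only difference is presentational---the paper imports the inequality $\E u(\mathbf X)-\E u(\mathbf Y)\ge -\int \lim_{t\to\infty} u_2(t,z)\,\Delta_F(t,z)\,dz$ from Hayes et al.\ (which is exactly your second boundary term after dropping the other two non-negative pieces and sending $b\to\infty$), whereas you derive the full three-term identity from scratch, making your version more self-contained.
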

\begin{proofsketch}
The proof utilises the fact that first-order stochastic dominance implies greater or equal expected utility \cite{hayes2022expected}. We subsequently show that the additional condition on the marginal distributions for distributional dominance implies strictly greater expected utility.
\end{proofsketch}

\section{A General Solution Set}
\label{sec:solution-set}
We adopt distributional dominance to define the distributional undominated set (DUS). The DUS has two important desiderata: it contains the Pareto front, i.e. the optimal set under SER and contains all optimal policies for multivariate risk-averse decision makers under ESR. The deferred proofs for the theoretical results can be found in the supplementary material.

\subsection{Distributional Undominated Set}
As the name suggests, the distributional undominated set contains only those policies which are not pairwise distributionally dominated. We define this formally in \cref{def:du-set}.

\begin{definition}
\label{def:du-set}
The distributional undominated set is the set of all policies that are not distributionally dominated:
\begin{equation}
    \duset = \left\{ \pi \in \Pi \mid \nexists \pi' \in \Pi, \zpip \du \zpi \right\}.
\end{equation}
\end{definition}

From this definition it is clear that all policies which are optimal for multivariate risk-averse decision makers are in the set. To show that the Pareto front is a subset as well, we first introduce \cref{lemma:dd-implies-pd}, stating that distributional dominance implies Pareto dominance.

\begin{lemma}
\label{lemma:dd-implies-pd}
For all policies $\pi, \pi' \in \Pi$,
\begin{equation*}
    \zpi \du \zpip \implies \vpi \pd \vpip.
\end{equation*}
\end{lemma}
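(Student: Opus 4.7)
The plan is to reduce multivariate FSD to a pointwise comparison of marginal expectations and then apply the classical univariate FSD-implies-expected-value result.

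First I would unpack the hypothesis $\zpi \du \zpip$ using \cref{def:dist-dom}: this gives (i) $\zpi \fsde \zpip$ and (ii) an index $i^\star \in [d]$ with $Z_{i^\star}^\pi \fsd Z_{i^\star}^{\pi'}$. The core step is to argue that the multivariate FSD relation (i) entails marginal FSD in \emph{every} coordinate $i$. This follows by specialising the vector $\mathbf{v}$ in \cref{def:fsd} to entries that push all but the $i$-th coordinate to $+\infty$: the joint CDF $F_{\zpi}(\mathbf{v})$ then collapses to the marginal CDF $F_{Z_i^\pi}(v_i)$, and the inequality $F_{\zpi}(\mathbf{v}) \le F_{\zpip}(\mathbf{v})$ becomes $F_{Z_i^\pi}(v_i) \le F_{Z_i^{\pi'}}(v_i)$ for all $v_i \in \mathbb{R}$, i.e.\ $Z_i^\pi \fsde Z_i^{\pi'}$ coordinatewise. (A fully rigorous version would use a monotone limit rather than literal $+\infty$, but this is routine measure theory.)

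Next I would invoke the standard scalar fact that univariate FSD preserves expectations: if $X \fsde Y$ then $\E[X] \geq \E[Y]$, with strict inequality under $\fsd$. Applying this coordinatewise yields $V_i^\pi = \E[Z_i^\pi] \geq \E[Z_i^{\pi'}] = V_i^{\pi'}$ for every $i \in [d]$, and applying the strict version at the coordinate $i^\star$ from (ii) gives $V_{i^\star}^\pi > V_{i^\star}^{\pi'}$. Together these are exactly the two conditions in \cref{def:pareto-dominance}, so $\vpi \pd \vpip$, completing the proof.

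The main obstacle is conceptual rather than technical: one must be careful that the strict marginal FSD condition built into \cref{def:dist-dom} is what delivers the strict inequality required by Pareto dominance. Relying on multivariate FSD alone would only yield $\vpi \pde \vpip$, which is why distributional dominance was defined with the extra marginal strictness clause. Beyond this, the argument is a straightforward combination of the standard reduction from joint CDFs to marginals and the well-known FSD-expectation inequality, so no novel machinery is required.
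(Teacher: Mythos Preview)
Your proposal is correct and follows essentially the same route as the paper: reduce the joint FSD condition to coordinatewise marginal FSD (the paper isolates this as a separate lemma, proved exactly via the limit-to-infinity argument you sketch), then apply the univariate FSD $\Rightarrow$ expectation inequality in its weak form for all coordinates and its strict form at the distinguished index $i^\star$. The paper makes the strict step explicit via the tail-integral representation $\E[Z] = \int_0^{\infty}(1 - F_Z(v))\,dv$, but this is just the standard proof of the scalar fact you invoke.
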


\begin{proofsketch}
The proof works by utilising a known link between the expected value of a random variable and its cumulative density function. Then, the conditions for distributional dominance imply that the expected value for each marginal distribution is greater or equal and at least one marginal distribution is strictly greater.
\end{proofsketch}

Leveraging \cref{lemma:dd-implies-pd}, it is a straightforward corollary that the Pareto front is a subset of the DUS. 
\begin{corollary}
\label{co:pf-subset-duset}
For any family of policies $\Pi$, the Pareto front is a subset of the distributional undominated set, i.e., 
\begin{equation*}
    \pf \subseteq \duset.
\end{equation*}
\end{corollary}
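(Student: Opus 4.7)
The plan is to prove the set inclusion by contrapositive, leveraging \cref{lemma:dd-implies-pd} directly. Concretely, I would show that if a policy is distributionally dominated, then it is Pareto dominated as well, which yields the complementary inclusion $\Pi \setminus \duset \subseteq \Pi \setminus \pf$ and hence $\pf \subseteq \duset$.

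More explicitly, I would proceed as follows. First, fix an arbitrary $\pi \in \pf$ and assume toward contradiction that $\pi \notin \duset$. Unfolding \cref{def:du-set}, this means there exists some $\pi' \in \Pi$ with $\zpip \du \zpi$. Applying \cref{lemma:dd-implies-pd} to this pair of policies (with the roles of $\pi$ and $\pi'$ swapped compared to the lemma's statement) yields $\vpip \pd \vpi$. But this contradicts $\pi \in \pf$ via \cref{def:pareto-front}, which forbids the existence of any such Pareto-dominating $\pi'$. Hence $\pi \in \duset$, and since $\pi$ was arbitrary, the inclusion $\pf \subseteq \duset$ follows.

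There is no genuine obstacle here; the work has effectively been absorbed into \cref{lemma:dd-implies-pd}. The only point that requires minor care is the direction of the implication: the lemma states that distributional dominance implies Pareto dominance for the \emph{same} ordered pair of policies, so one must be careful to apply it to $(\pi', \pi)$ rather than $(\pi, \pi')$ when carrying out the contrapositive. Apart from this bookkeeping detail, the corollary is a direct logical consequence of the lemma and the definitions of the two solution sets.
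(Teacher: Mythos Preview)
Your proposal is correct and follows essentially the same approach as the paper: both proceed by contradiction, assuming $\pi \in \pf$ but $\pi \notin \duset$, extracting a dominating $\pi'$ from \cref{def:du-set}, applying \cref{lemma:dd-implies-pd} to obtain $\vpip \pd \vpi$, and contradicting $\pi \in \pf$. Your remark about orienting the lemma correctly is apt but, as you note, purely bookkeeping.
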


We highlight that our dominance results and solution sets are not restricted to MOMDPs but apply to any stochastic multi-objective decision problem with vector-valued outcomes.

\subsection{Computing the DUS}
\label{sec:computing-dus}
To deal with return distributions computationally, we project distributions to multivariate categorical distributions \cite{bellemare2023distributional,hayes2022expected}. This ensures that finite memory is used, and, importantly, that computations can be performed efficiently. Concretely, to verify first-order stochastic dominance, we need only compare a finite number of points as the CDF is a multivariate step function with steps at $\mathbf{v}_1, \mathbf{v}_2, \dotsc, \mathbf{v}_n$. Formally, for the categorical distribution $\mathbf{X}$ the cumulative distribution at $\mathbf{x}$ is computed as follows,

\begin{equation}
    F_{\mathbf{X}}(\mathbf{x}) = \sum_{\mathbf{v}_i \pdef \mathbf{x}} p(\mathbf{v}_i).
\end{equation}

Additionally, discrete distributions enable straightforward computation of marginal distributions, thus having all ingredients to check distributional dominance (see \cref{def:dist-dom}). Then, starting from a given set of policies, the DUS can be computed using a modified version of the Pareto Prune (PPrune) algorithm \cite{roijers2017multiobjective} that checks for distributional dominance rather than Pareto dominance. We refer to the resulting pruning algorithm as \emph{DPrune}.

\section{A Solution Set for ESR}
As the DUS is a superset of the Pareto front and further contains optimal policies under ESR, we can intuitively assume that it might grow very large in size, thereby complicating its practical use in decision support systems. When considering SER, it is possible to reduce the set to the Pareto front by utilising existing pruning operators \cite{roijers2017multiobjective}. We contribute a similar approach for ESR and present both the resulting solution set as well as a pruning algorithm for this purpose.

\subsection{Convex Mixture of Distributions}
For univariate distributions, it has been shown that a mixture distribution can be constructed that first-order stochastic dominates another distribution if and only if for any decision maker there exists a distribution in the mixture which is preferred over the dominated distribution \cite{fishburn1974convex,bawa1985determination}. Mixture dominance has also been considered for multivariate distributions \cite{denuit2013multivariate}. 

Here, we show that convex distributional dominance implies greater expected utility for multivariate risk-averse decision makers when considering bivariate distributions. 

\begin{theorem}
\label{th:mixture-dom-implies-u}
Let $\{\mathbf{X}_1, \dotsc, \mathbf{X}_n\}$ and $\{\mathbf{Y}_1, \dotsc, \mathbf{Y}_n\}$ be sets of two-dimensional random vectors. Then,
\begin{equation*}
    \exists \lambda \in \Delta^n: \sum_{i=1}^n \lambda_i \mathbf{X}_i \du \sum_{i=1}^n \lambda_i \mathbf{Y}_i,
\end{equation*}
implies that $\forall u \in \mathcal{U}$ with $\frac{\partial^2 u(x_1, x_2)}{\partial x_1 \partial x_2} \leq 0$,
\begin{equation*}
    \exists i \in [n]: \E u(\mathbf{X}_i) > \E u(\mathbf{Y}_i).
\end{equation*}
\end{theorem}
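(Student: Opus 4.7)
The plan is to reduce Theorem 5.1 to a single application of the pointwise statement in \cref{th:dd-implies-u} followed by a one-line averaging argument. The first thing I would do is pin down the notation: in the context of this paper, $\sum_{i=1}^n \lambda_i \mathbf{X}_i$ denotes the mixture random vector obtained by drawing an index $i$ with probability $\lambda_i$ and then sampling from $\mathbf{X}_i$ (this is the same convex-combination-of-policies interpretation used in \cref{def:convex-hull}). Under this interpretation, for any measurable $u$ linearity of expectation yields
\begin{equation*}
\E\, u\!\left(\sum_{i=1}^n \lambda_i \mathbf{X}_i\right) = \sum_{i=1}^n \lambda_i\, \E\, u(\mathbf{X}_i),
\end{equation*}
and symmetrically for the mixture of the $\mathbf{Y}_i$. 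This is the bridge between the hypothesis (stated in terms of the mixture vectors) and the conclusion (stated in terms of the individual $\mathbf{X}_i$, $\mathbf{Y}_i$).

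Next, I would invoke \cref{th:dd-implies-u} directly on the pair of bivariate vectors $\mathbf{X} := \sum_i \lambda_i \mathbf{X}_i$ and $\mathbf{Y} := \sum_i \lambda_i \mathbf{Y}_i$. The hypothesis of Theorem 5.1 is exactly $\mathbf{X} \du \mathbf{Y}$, and both $\mathbf{X}$ and $\mathbf{Y}$ are bivariate, so \cref{th:dd-implies-u} gives, for every $u \in \mathcal{U}$ with $\partial^2 u / \partial x_1 \partial x_2 \leq 0$, the strict inequality $\E\, u(\mathbf{X}) > \E\, u(\mathbf{Y})$.

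Finally, combining the two displayed identities with the strict inequality from \cref{th:dd-implies-u} gives
\begin{equation*}
\sum_{i=1}^n \lambda_i \bigl(\E\, u(\mathbf{X}_i) - \E\, u(\mathbf{Y}_i)\bigr) > 0.
\end{equation*}
Since $\lambda \in \Delta^n$ and hence $\lambda_i \geq 0$, a sum of nonnegative weights times scalars cannot be strictly positive unless at least one of the scalars is strictly positive; so there exists an index $i \in [n]$ with $\E\, u(\mathbf{X}_i) > \E\, u(\mathbf{Y}_i)$, which is the desired conclusion.

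I do not expect any serious obstacle here: the only delicate point is the linearity identity, which requires interpreting $\sum_i \lambda_i \mathbf{X}_i$ as a mixture distribution rather than an algebraic sum of vectors. If one instead interpreted it as an almost-sure linear combination of correlated vectors on a common probability space, linearity would fail and a different proof would be required; however, the former reading is consistent with the rest of the paper, so the argument reduces to two essentially trivial steps glued onto \cref{th:dd-implies-u}.
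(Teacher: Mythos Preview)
Your proposal is correct and matches the paper's own proof essentially line for line: apply \cref{th:dd-implies-u} to the mixture vectors, use linearity of expectation under the mixture interpretation, and conclude by the averaging argument. Your added remark clarifying that $\sum_i \lambda_i \mathbf{X}_i$ must be read as a mixture (so that $\E\,u(\sum_i \lambda_i \mathbf{X}_i)=\sum_i \lambda_i \E\,u(\mathbf{X}_i)$ holds) is a useful point the paper leaves implicit.
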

\begin{proofsketch}
The proof follows from \cref{th:dd-implies-u} and linearity of expectation.
\end{proofsketch}

Observe that in the special case where all random vectors $\mathbf{Y}_i$ are equal, mixture dominance of $\mathbf{Y}$ implies that all decision makers will prefer a random vector $\mathbf{X}_i$ over $\mathbf{Y}$. 

\subsection{Convex Distributional Undominated Set}
We define a final solution set, called the convex distributional undominated set (CDUS), that contains only those policies which are undominated by a mixture of distributions. \Cref{th:mixture-dom-implies-u} guarantees that for all decision makers in the class there is an optimal policy contained in the set. We define the CDUS formally below. It follows from this definition that the CDUS is a subset of the DUS.

\begin{definition}
\label{def:convex-du-set}
The CDUS is the set of all policies that are not distributionally dominated by a convex mixture:
\begin{equation*}
    \cduset = \left\{ \pi \in \Pi \mid \nexists \lambda \in \Delta^{|\Pi|}: \sum_{i=1}^{|\Pi|} \lambda_i \mathbf{Z}^{\pi_i} \du \zpi\right\}.
\end{equation*}
\end{definition}

Given the myriad of solution sets in multi-objective decision making, it is useful to define a complete taxonomy between them. From \cref{co:pf-subset-duset}, we know that the Pareto front is a subset of the DUS. Additionally, it follows from \cref{def:convex-du-set} that the CDUS is also a subset of the DUS. Earlier work has shown that the convex hull is a subset of the Pareto front \cite{roijers2017multiobjective} and we show that this is also true for the CDUS.

\begin{corollary}
\label{co:ch-subset-cesr}
For any family of policies $\Pi$,
\begin{equation*}
    \text{CH}(\Pi) \subseteq \cduset.
\end{equation*}
\end{corollary}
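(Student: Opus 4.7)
The plan is to prove \cref{co:ch-subset-cesr} by contraposition: I will show that any policy excluded from the CDUS must already be excluded from the convex hull. Concretely, suppose $\pi \notin \cduset$. Then by \cref{def:convex-du-set} there exists $\lambda \in \Delta^{|\Pi|}$ such that
\begin{equation*}
\sum_{i=1}^{|\Pi|} \lambda_i \mathbf{Z}^{\pi_i} \du \zpi.
\end{equation*}
The goal is then to lift this distributional dominance on the random-vector level to Pareto dominance of the corresponding expected-value mixtures, which is exactly what is needed to conclude $\pi \notin \ch$ via \cref{def:convex-hull}.

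First, I would observe that \cref{lemma:dd-implies-pd} is stated for policies, but its proof sketch relies only on the definition of distributional dominance plus the standard identity linking $\E[X]$ to its CDF. Hence the lemma extends verbatim to any pair of $d$-dimensional random vectors: distributional dominance between two random vectors implies Pareto dominance of their expectation vectors. I would briefly note this extension (or invoke it as an immediate generalisation of \cref{lemma:dd-implies-pd}) so that I can apply it to the mixture random vector $\sum_i \lambda_i \mathbf{Z}^{\pi_i}$ and to $\zpi$.

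Second, by linearity of expectation,
\begin{equation*}
\E\left[\sum_{i=1}^{|\Pi|} \lambda_i \mathbf{Z}^{\pi_i}\right] = \sum_{i=1}^{|\Pi|} \lambda_i \E\!\left[\mathbf{Z}^{\pi_i}\right] = \sum_{i=1}^{|\Pi|} \lambda_i \mathbf{V}^{\pi_i}.
\end{equation*}
Combining this with the generalised \cref{lemma:dd-implies-pd} applied to the mixture and $\zpi$ yields
\begin{equation*}
\sum_{i=1}^{|\Pi|} \lambda_i \mathbf{V}^{\pi_i} \pd \vpi,
\end{equation*}
which by \cref{def:convex-hull} is precisely the statement that $\pi \notin \ch$. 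This completes the contrapositive and therefore the containment.

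The only non-routine step is the mild generalisation of \cref{lemma:dd-implies-pd} from policy-induced return vectors to arbitrary random vectors (in particular to convex mixtures of them). I expect this to be essentially immediate from inspecting the proof of \cref{lemma:dd-implies-pd}, since that proof uses no property of policies beyond the fact that $\zpi$ is a $d$-dimensional random vector with well-defined expectation; once that extension is granted, the rest of the argument is just linearity of expectation and unwinding the definitions, so I anticipate no substantive obstacle.
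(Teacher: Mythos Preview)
Your proposal is correct and follows essentially the same route as the paper's proof: assume $\pi\notin\cduset$, use (a straightforward extension of) \cref{lemma:dd-implies-pd} to pass from $\sum_i\lambda_i\mathbf{Z}^{\pi_i}\du\zpi$ to Pareto dominance of expectations, invoke linearity of expectation to get $\sum_i\lambda_i\mathbf{V}^{\pi_i}\pd\vpi$, and conclude $\pi\notin\ch$. The paper phrases this as a contradiction rather than a contrapositive and applies the lemma's generalisation to mixture random vectors silently; your explicit mention of that extension is if anything more careful.
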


The final missing piece of the puzzle is the relation between the CDUS and Pareto front. However, here one can find counterexamples which disprove that the CDUS is either a subset or superset of the Pareto front. The landscape of solution sets for multi-objective decision making can then be summarised as shown in \cref{fig:solution-sets}.

\begin{figure}
    \centering
    \includegraphics{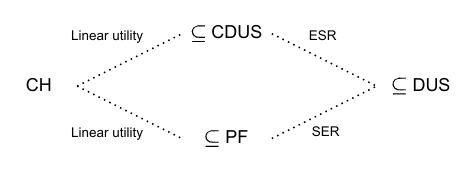}
    \caption{A taxonomy of solution sets in multi-objective decision making.}
    \label{fig:solution-sets}
\end{figure}

\subsection{Pruning to the CDUS}
To prune a set of distributions to its CDUS, we must check for each distribution whether it is dominated by a mixture of the other distributions. Fortunately, this verification is feasible by restating the problem using linear programming. Concretely, we extend an algorithm that checks whether a univariate distribution is convex first-order stochastic dominated to our setting \cite{bawa1985determination}. We show the resulting linear program \emph{CDPrune} in \cref{alg:cdd}.

For notational simplicity, we define the size of the set of distributions allowed in the mixture as $n$. Then the linear program takes in total $n+1$ distributions as input, where the final distribution is the distribution to check. As these distributions are discrete, the CDFs are multivariate step functions that step at a finite number of points. Let $D_i$ be the set of points at which the CDF of distribution $i$ steps. Then $D = \bigcup_{i=1}^{n+1} D_i$ is the union of all such points. We denote $h = |D|$.

\begin{algorithm}[t]
\caption{CDPrune}
\label{alg:cdd}
    \begin{algorithmic}[]
    \Require A set of return distributions $\mathcal{Z}$ allowed in the mixture and a return distribution $\mathbf{Z}$ to check
    \Ensure Whether the distribution is convex dominated
    \State \begin{align}
    & \text{Maximise } \delta = \sum_{i=1}^n \sum_{k=1}^d l_{i,k} \label{eq:maximisation}\\
    & \text{Subject to:} \nonumber\\
    & \quad \sum_{i=1}^n \lambda_i F_{\mathcal{Z}_i}(\mathbf{v}_j) + s_{j} = F_{\mathbf{Z}}(\mathbf{v}_j) \quad j = 1, \dotsc, h \label{eq:joint-constraint}\\
    & \quad \sum_{i=1}^n \lambda_i F_{\mathcal{Z}_{i,k}}(v_{j,k}) + l_{j,k} = F_{Z_k}(v_{j,k}) \nonumber \\
    & \quad \qquad j = 1, \dotsc, h \quad k = 1, \dotsc, d \label{eq:marginal-constraint}\\
    & \quad \sum_{i=1}^n \lambda_i = 1 \nonumber \\
    & \quad \lambda_i \geq 0 \quad i = 1, \dotsc, n \nonumber\\
    & \quad s_j \geq 0 \quad j = 1, \dotsc, h \quad \text{where } s_j \text{ is a slack variable} \label{eq:slack-variables}
    \end{align}
    \State \Return $\textproc{True}$ \bf{if} $\delta > 0$ \bf{else} $\textproc{False}$
    \end{algorithmic}
\end{algorithm}

The linear program maximises $\delta$, which is the sum of slack variables that make up the difference between the CDFs of the marginal mixture distributions and the marginals of the distribution to check (\cref{eq:maximisation}). If this procedure leads to a $\delta$ greater than zero, this implies that the conditions for distributional dominance are met and the distribution is dominated by the mixture. Note that we may omit an additional constraint on the $l$ slack variables to be greater or equal to zero, as this is implied by the constraint on the $s$ slack variables (\cref{eq:slack-variables}).


When no exact formulation of the joint CDFs is available, we propose an alternative linear program that operates solely on the marginal distributions. In this case, it is necessary to change the first constraint in \cref{eq:joint-constraint} to 
\begin{equation}
\sum_{i=1}^n \lambda_i \prod_{k=1}^d F_{\mathcal{Z}_{i,k}}(\mathbf{v}_{j,k}) + s_{j} = \prod_{k=1}^d F_{Z_k}(v_{j,k}),
\end{equation}
while the second constraint in \cref{eq:marginal-constraint} is removed altogether. By maximising the sum of $s$ slack variables, the resulting linear program essentially checks for strict first-order stochastic dominance between random vectors with independent variables. One can verify that this implies distributional dominance for independent variables and otherwise may serve as an approximation.

\section{Computing the Solution Sets}
Our final contribution relates theory to practice by designing an algorithm able to learn the DUS in a given MOMDP. We evaluate this algorithm on different sizes of MOMDPs and compare the resulting sizes of the sets when pruned down to the subsets covered in the taxonomy in \cref{fig:solution-sets}. All code is available at \url{https://github.com/wilrop/distributional-dominance}.

\subsection{Distributional Multi-Objective Q-Learning}
Pareto Q-learning (PQL) is a classical algorithm used in multi-objective reinforcement learning to learn the Pareto front \cite{vanmoffaert2014multiobjective}. We find that the general framework of PQL lends itself nicely to learning the DUS. Our algorithm, DIstributional Multi-Objective Q-learning (DIMOQ) is shown in \cref{alg:dimoq}. 


\begin{algorithm}[]
\caption{DIMOQ}
\label{alg:dimoq}
    \begin{algorithmic}[1]
    \Require The state space $\mathcal{S}$, actions space $ \mathcal{A}$ and discount factor $\gamma$
    \Ensure The DUS
    \State Initialise all $Q(s, a)$ as empty sets
    \State Initialise all $\mathbf{R}(s, a, s')$ as Dirac delta distributions 
    \State Estimate $T: \mathcal{S} \times \mathcal{A} \times \mathcal{S} \to [0, 1]$ from random walks
    \For{each episode}
    \State Initialise state $s$
    \Repeat
    \State Take an action $a \sim \pi(a|s)$
    \State Observe the next state $s' \in \mathcal{S}$ and reward $\mathbf{r} \in \mathbb{R}^d$
    \State $ND(s, a, s') \gets \textproc{DPrune} \left(\bigcup_{a' \in \mathcal{A}} Q(s', a')\right)$
    \State Update the reward distribution $\mathbf{R}(s, a, s')$ with $\mathbf{r}$
    \State $s \gets s'$
    \Until{$s$ is terminal}
    \EndFor
    \State \Return $\textproc{DPrune}\left(\bigcup_{a \in \mathcal{A}}Q\left(0, a\right)\right)$
    \end{algorithmic}
\end{algorithm}

The algorithm first initialises the Q-sets containing undominated distributions to empty sets and reward distributions to Dirac delta distributions at zero. During training, the agent follows an $\epsilon$-greedy policy and learns the immediate reward distributions $\mathbf{R}(s, a, s')$ separate from the expected future reward distributions $ND(s, a, s')$. Learning the immediate reward distribution is done by recording the empirical distribution, while learning the future reward distribution is done using a modified version of the Q-update rule employed for PQL (see \cref{eq:q-update}). Note, however, that for DIMOQ the pruning operator for the distributions in the next state is \emph{DPrune} rather than \emph{PPrune}.

\subsubsection{Dealing With Stochasticity}
\label{sec:stochasticity}
The Q-learning update in PQL is described for deterministic environments. As we deal with fundamentally stochastic environments, we propose an alternative formulation in \cref{eq:q-update}. 

\begin{equation}
\label{eq:q-update}
Q(s,a) \gets \bigoplus_{s'}T(s'|s, a)\left[\mathbf{R}(s, a, s') + \gamma ND(s, a, s')\right]
\end{equation}

First, the term $\left[\mathbf{R}(s, a, s') + \gamma ND(s, a, s')\right]$ constructs a set of expected return distributions when the state-action pair leads to $s'$. Next, the $\bigoplus_{s'}T(s'|s, a)$ constructs mixture policies over all next states $s'$ where each distribution is weighted according to its transition probability $T(s'|s, a)$. 

In a learning setting, the transition probabilities are not assumed to be given. As such, we perform a number of random walks before training to estimate these probabilities. During learning, we do not update the transition function anymore, to avoid creating unnecessary distributions which will never be observed again due to drift in the probabilities. 

\subsubsection{Action Selection}
The second adaptation necessary to learn the DUS rather than the Pareto front is the action scoring and selection mechanism. Even for PQL, this is complicated as it is not obvious what metric to use to determine the quality of a set of Q-values. Several set evaluation mechanisms have been proposed for this, such as for example the hypervolume metric \cite{guerreiro2022hypervolume} or using a Chebyshev scalarisation function \cite{vanmoffaert2013scalarized}. We note that these approaches can be extended to DIMOQ as well by computing the expected value of the distribution first and then continuing with one of the aforementioned scoring metrics. 

In addition to the classical scoring methods, we propose using a linear utility function as a baseline and scoring a set of distributions by its mean expected utility. As linear scalarisation can be done efficiently, this results in a performant scoring method. An additional advantage of this approach is that when more information about the shape of the utility function is known, the linear utility baseline can be substituted with a better approximation.

\subsubsection{Limiting Set Size}
Due to stochasticity in the environment and because the Q-update rule in \cref{eq:q-update} performs all possible combinations, the Q-sets in the algorithm are quick to explode in size. To constrain the size of the sets, we propose two mechanisms. 

First, we limit the precision of the distributions that are learned. This approach was demonstrated to be successful in multi-objective dynamic programming as well \cite{mandow2022multiobjective}. Second, we set a fixed limit on the set size. Whenever this limit is crossed, we perform agglomerative clustering where the number of clusters equals the maximum set size. As input for the clustering, we compute the pairwise distances between all distributions. In experiments, we compute the Jensen-Shannon distance between the flattened distributions. Alternatively, one could use the cost of optimal transport between pairs of distributions.

\subsection{Empirical Results}
We evaluate DIMOQ (\cref{alg:dimoq}) and CDPrune (\cref{alg:cdd}) on randomly generated MOMDPs of different sizes shown in \cref{tab:momdp-config}. For each size category, we repeat the experiment with seeds one through five and perform $50,000$ random walks to estimate $T$ followed by $2,000$ training episodes. All experiments considered two objectives, used a discount factor of $1$ and limited the precision of distributions to three decimals. Finally, the experiments were run on a single core of an Intel Xeon Gold 6148 processor, with a maximum RAM requirement of 2GB. 

\begin{table}[]
\centering
\resizebox{\columnwidth}{!}{%
\begin{tabular}{cccccc}
\toprule
\textbf{Name} & \textbf{States} & \textbf{Actions} & \textbf{Next states} & \textbf{Timesteps} & \textbf{Set limit} \\ \midrule
\textit{Small}                     & 5                          & 2                           & $[1, 2]$                          & 3                             & 10                            \\
\textit{Medium}                    & 10                         & 3                           & $[1, 2]$                          & 5                             & 15                            \\
\textit{Large}                     & 15                          & 4                           & $[1, 2]$                          & 7                             & 20    \\
\bottomrule
\end{tabular}%
}
\caption{Configuration of the generated MOMDPs. Timesteps refer to the maximum time horizon after which the episode is terminated.}
\label{tab:momdp-config}
\end{table}

We observe that the runtimes for DIMOQ shown in \cref{tab:dimoq} are heavily influenced by the size of the MOMDP. Additionally, there is a large variance in runtime across different seeds. We find that these differences cannot solely be attributed to having a more complex transition function, but are most likely due to the interplay between the transition function and the reward function. Specifically, if transitions result in a large number of undominated returns each iteration needs to perform a large number of combinations. It is clear however that scaling becomes an issue for DIMOQ when going to larger action and state spaces. As such, we plan to investigate the use of function approximation to further extend DIMOQ to larger MOMDPs. Additionally, we note that MOMDPs modelled after real-world scenarios will likely contain more structure and are thus interesting to study for future work.

\begin{table}[]
\centering
\resizebox{\columnwidth}{!}{%
\begin{tabular}{ccccc}
\toprule
\textbf{Name} & \textbf{Mean} & \textbf{SD} & \textbf{Min} & \textbf{Max} \\ \midrule
\textit{Small}                     & 00:01:21                 & 00:00:25                & 00:00:58                & 00:02:01                \\
\textit{Medium}                    & 01:49:11                 & 00:47:07                & 00:17:41                & 02:31:18                \\
\textit{Large}                     & 17:01:25                 & 06:02:35                & 09:46:06                & 27:55:55       \\
\bottomrule
\end{tabular}%
}
\caption{Runtime for DIMOQ on randomly generated MOMDPs.}
\label{tab:dimoq}
\end{table}

In \cref{tab:pruning} we show the average size of the DUS, as well as what percentage of the DUS belongs to the CDUS, Pareto front and convex hull on average. We observe a similar pattern, namely that larger MOMDPs lead to larger solution sets. Interestingly though, larger MOMDPs also allow for a greater percentage of policies to be pruned for the smaller solution sets, which is beneficial for their use in decision support.

\begin{table}[]
\centering
\resizebox{\columnwidth}{!}{%
\begin{tabular}{ccccc}
\toprule
\textbf{Name}            & \textbf{DUS}                & \textbf{CDUS}                & \textbf{PF}                  & \textbf{CH}                  \\ \midrule
\textit{Small}  & $13.0 \pm 10.73$   & $95.71\% \pm 8.57$  & $39.88\% \pm 16.45$ & $36.07\% \pm 20.12$ \\
\textit{Medium} & $372.2 \pm 211.88$ & $61.27\% \pm 12.16$ & $6.28\% \pm 6.70$   & 2$.87\% \pm 3.48$   \\
\textit{Large}  & $639.0 \pm 221.71$ & $53.00\% \pm 5.68$  & $3.43\% \pm 2.01$   & $1.33\% \pm 0.82$  \\
\bottomrule
\end{tabular}%
}
\caption{The relative sizes of the pruned subsets.}
\label{tab:pruning}
\end{table}

We highlight that although the CDUS is often substantially smaller than the DUS, the Pareto front and convex hull are much smaller than either. Intuitively, this is because when both objectives are to be maximised, Pareto optimal policies can only occur on the upper right hand region of the objective space, while policies in the DUS and CDUS may still exist in the Pareto dominated part of the space. However, recall from \cref{exmp:utility} that these policies may still be optimal under ESR. We visualise this in \cref{fig:visualisation} where the expected values for the final distributions from one representative experiment are plotted. 

\begin{figure}
    \centering
    \includegraphics[width=0.85\columnwidth]{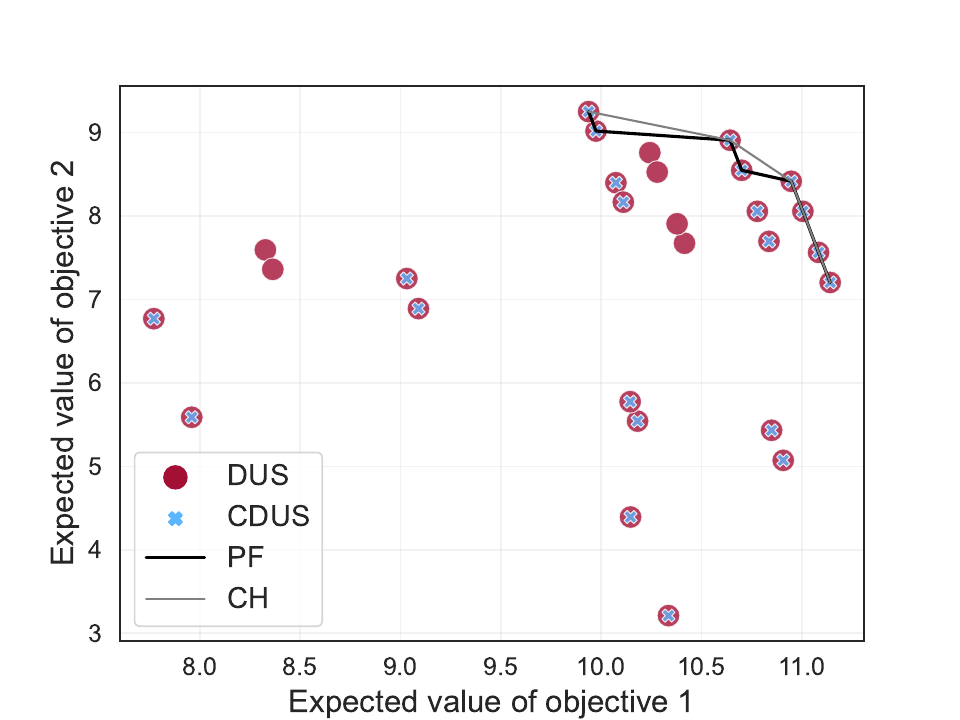}
    \caption{The resulting solution sets for a sample experiment. Policies in the dominated part of the objective space may still be optimal for certain decision makers and can thus not be excluded a priori.}
    \label{fig:visualisation}
\end{figure}

Finally, we remark that while the CDUS cannot be guaranteed to be a superset of the Pareto front in general, in all experiments this was in fact the case. This is also apparent from the results in \cref{fig:visualisation}. An interesting direction for future work is to specify the exact conditions under which this relation is guaranteed to hold.

\section{Related Work}
\label{sec:related-work}
Stochastic dominance has long been employed in areas of finance and economics \cite{levy2016stochastic} and has more recently also found use in solving decision making problems through reinforcement learning (RL). In single-objective settings, Epshteyn and DeJong~\shortcite{epshteyn2006qualitative} employ stochastic dominance to learn optimal policies in MDPs with incomplete specifications. Martin et al.~\shortcite{martin2020stochastically} define a risk-aware distributional algorithm that utilises stochastic dominance at decision time to determine the best action. Techniques from stochastic dominance have also been used to analyse the theoretical properties of distributional RL \cite{rowland2018analysis}.

The distributional approach in general has become an active area of research for both single-objective and multi-objective settings. For a thorough overview of techniques in single-objective settings, we refer to a recent textbook on the matter \cite{bellemare2023distributional}. In multi-objective settings, Hayes et al.~\shortcite{hayes2021dmcts} and Reymond et al.~\shortcite{reymond2023actorcritic} define single-policy multi-objective RL algorithms that can learn policies for nonlinear utility functions under the ESR criterion. Furthermore, Hayes et al.~\shortcite{hayes2022decision} outline a multi-policy multi-objective distributional value iteration algorithm that computes a set of policies for the ESR criterion, known as the ESR set. The ESR set is the first solution set for use in multi-objective sequential decision making under the ESR criterion and leverages strict first-order stochastic dominance to determine whether a policy is included in the set. This set was shown to contain all optimal policies for multivariate risk-averse decision makers, but implicitly assumes all variables in the random vector to be independent \cite{hayes2022expected}.

\section{Conclusion}
\label{sec:conclusion}
We investigate multi-objective decision making and find that existing solution sets frequently fall short in specific use cases. To resolve this, we first propose the distributional undominated set. We show that this set contains both the Pareto front as well as all optimal policies for multivariate risk-averse decision makers optimising their expected utility. We subsequently present the convex distributional undominated set, which aims to target the expected utility setting in particular. From this, we determine a taxonomy of existing solution sets in multi-objective decision making. 

To facilitate the application of these concepts, we present computational approaches for learning the distributional undominated set and pruning operators to reduce the set to the convex distributional undominated set. Through experiments, we demonstrate the feasibility and effectiveness of these methods. As such, this work offers a promising approach to decision support in real-world problems.

\section*{Acknowledgments}
WR is supported by the Research Foundation – Flanders (FWO), grant numbers 1197622N. CH is funded by the University of Galway Hardiman Scholarship. This research was supported by funding from the Flemish Government under the ``Onderzoeksprogramma Artifici\"{e}le Intelligentie (AI) Vlaanderen'' program.

\bibliographystyle{named}
\bibliography{bibliography}
\clearpage
\appendix

\section{Proofs of Theoretical Results}
In this section, we present the deferred proofs from the main text.

\subsection{Distributional Dominance}
We introduce several theoretical results concerning distributional dominance. In particular, we show that a greater expected utility for all strictly increasing utility functions implies distributional dominance. To prove this, we present two necessary lemmas. First, \cref{lemma:u-implies-fsd} is a straightforward generalisation to an earlier result by Fishburn~\shortcite{fishburn1974convex}.

\begin{lemma}
\label{lemma:u-implies-fsd}
Let $\mathbf{X}$ and $\mathbf{Y}$ be d-dimensional random vectors. Then,
\begin{equation*}
    \forall u \in \mathcal{U}: \E u(\mathbf{X}) \geq \E u(\mathbf{Y}) \implies \mathbf{X} \fsde \mathbf{Y}.
\end{equation*}
\end{lemma}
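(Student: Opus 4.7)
The plan is to argue by contrapositive, adapting Fishburn's univariate step-function construction~\shortcite{fishburn1974convex} to the partial Pareto order. Suppose $\mathbf{X}$ does not first-order stochastically dominate $\mathbf{Y}$, so that some $\mathbf{v}^* \in \mathbb{R}^d$ witnesses $F_\mathbf{X}(\mathbf{v}^*) > F_\mathbf{Y}(\mathbf{v}^*)$; my goal is to exhibit a single $u \in \mathcal{U}$ with $\E u(\mathbf{X}) < \E u(\mathbf{Y})$, contradicting the hypothesis. For a parameter $\epsilon > 0$, the candidate I would use is
\[
u_\epsilon(\mathbf{x}) = \bigl(1 - \mathbbm{1}[\mathbf{x} \pdef \mathbf{v}^*]\bigr) + \epsilon \sum_{i=1}^d x_i,
\]
where the first term is a multivariate step at $\mathbf{v}^*$ and the second is an arbitrarily small strictly increasing perturbation, inserted to restore the strict Pareto monotonicity demanded by $\mathcal{U}$.

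The next step is to verify membership in $\mathcal{U}$ and then control the expectation gap. Monotonicity follows because whenever $\mathbf{x} \pd \mathbf{y}$ the linear term gives $\sum_i x_i > \sum_i y_i$ strictly, while the step term is non-decreasing along $\pd$: if $\mathbbm{1}[\mathbf{y} \pdef \mathbf{v}^*] = 0$ then $y_j > v^*_j$ for some $j$, and $x_j \geq y_j > v^*_j$ forces $\mathbbm{1}[\mathbf{x} \pdef \mathbf{v}^*] = 0$ as well. Taking expectations then gives
\[
\E u_\epsilon(\mathbf{Y}) - \E u_\epsilon(\mathbf{X}) = \bigl(F_\mathbf{X}(\mathbf{v}^*) - F_\mathbf{Y}(\mathbf{v}^*)\bigr) + \epsilon\,\E\!\left[\textstyle\sum_{i} (Y_i - X_i)\right],
\]
whose first summand is a fixed strictly positive constant and whose second is $O(\epsilon)$, so picking $\epsilon$ small enough forces the difference to be strictly positive and delivers the contradiction.

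The main obstacle, compared to the univariate setting, is that the naive step $1 - \mathbbm{1}[\mathbf{x} \pdef \mathbf{v}^*]$ is only non-decreasing along $\pd$ rather than strictly increasing: the staircase boundary of the lower orthant contains Pareto-comparable pairs that share the same indicator value, so Fishburn's argument does not transplant directly. The $\epsilon$-perturbation is exactly what repairs this without disturbing the sign of the dominant term as $\epsilon \to 0$. A minor technical assumption is that $\mathbf{X}$ and $\mathbf{Y}$ have finite first moments, which holds in the MOMDP setting with bounded rewards and $\gamma<1$; otherwise the tie-breaker $\sum_i x_i$ could be swapped for a bounded strictly increasing function such as $\sum_i \arctan(x_i)$.
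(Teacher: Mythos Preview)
Your proposal is correct and follows essentially the same contrapositive-via-step-function argument as the paper: both pick a witness $\mathbf{v}^*$ with $F_\mathbf{X}(\mathbf{v}^*) > F_\mathbf{Y}(\mathbf{v}^*)$ and build a utility close to $1 - \mathbbm{1}[\mathbf{x} \pdef \mathbf{v}^*]$ to flip the expected-utility inequality. The only difference is in how strict monotonicity is enforced---the paper invokes an unspecified ``smooth approximation'' to the step while you add an explicit $\epsilon \sum_i x_i$ (or $\sum_i \arctan x_i$) tie-breaker---and your version is the more carefully worked out of the two.
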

\begin{proof}
We show this by contradiction. Let $\mathbf{v} \in \mathbb{R}^d$ such that $F_\mathbf{X}(\mathbf{v}) > F_\mathbf{Y}(\mathbf{v})$. Define $u$ to be a smooth approximation to the multivariate step function with $u(\mathbf{z}) = 0$ when $\mathbf{z} \preceq_\text{p} \mathbf{v}$ and $1$ otherwise. It is clear that $u \in \mathcal{U}$. Then,
\begin{align}
\E u(\mathbf{X}) - \E u(\mathbf{Y}) & = (1 - F_\mathbf{X}(\mathbf{v})) - (1 - F_\mathbf{Y}(\mathbf{v})) \\
& = F_\mathbf{Y}(\mathbf{v}) - F_\mathbf{X}(\mathbf{v}) \\
& < 0. \qedhere
\end{align}
\end{proof}

The second lemma relates first-order stochastic dominance on the joint distribution to the same restriction on all of the marginal distributions. 
\begin{lemma}
\label{lemma:cdf-marginals-fsd}
Let $\mathbf{X}$ and $\mathbf{Y}$ be d-dimensional random vectors. Then,
\begin{equation*}
    \mathbf{X} \fsde \mathbf{Y} \implies \forall i \in [d]: X_i \fsde Y_i.
\end{equation*}
\end{lemma}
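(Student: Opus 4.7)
The plan is to derive each marginal inequality from the joint inequality by letting the other coordinates tend to $+\infty$ and appealing to continuity of probability measures from above on the complements (equivalently, from below on nested events).

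Fix an index $i \in [d]$ and a value $v \in \mathbb{R}$; the goal is to show $F_{X_i}(v) \leq F_{Y_i}(v)$. I would first observe that under the convention $F_{\mathbf{X}}(\mathbf{v}) = P(\mathbf{X} \pdef \mathbf{v})$ used in the paper, the marginal CDF can be recovered as
\begin{equation*}
F_{X_i}(v) = P(X_i \leq v) = \lim_{n \to \infty} P\bigl(X_i \leq v,\ X_j \leq n \text{ for all } j \neq i\bigr),
\end{equation*}
because the events on the right form an increasing sequence whose union is $\{X_i \leq v\}$, and continuity of probability from below applies. The identical statement holds for $\mathbf{Y}$.

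Next I would introduce the sequence of vectors $\mathbf{v}^{(n)} \in \mathbb{R}^d$ defined by $v^{(n)}_i = v$ and $v^{(n)}_j = n$ for $j \neq i$. By construction,
\begin{equation*}
F_{\mathbf{X}}(\mathbf{v}^{(n)}) = P\bigl(X_i \leq v,\ X_j \leq n \text{ for all } j \neq i\bigr) \xrightarrow[n \to \infty]{} F_{X_i}(v),
\end{equation*}
and likewise $F_{\mathbf{Y}}(\mathbf{v}^{(n)}) \to F_{Y_i}(v)$. The hypothesis $\mathbf{X} \fsde \mathbf{Y}$ gives $F_{\mathbf{X}}(\mathbf{v}^{(n)}) \leq F_{\mathbf{Y}}(\mathbf{v}^{(n)})$ for every $n$, and taking limits on both sides yields $F_{X_i}(v) \leq F_{Y_i}(v)$, as required.

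This argument is essentially bookkeeping with the definition, so I do not expect any real obstacle; the only point that needs care is ensuring that the convention for the CDF (non-strict Pareto dominance, i.e.\ closed orthants) matches the monotone-convergence step, but this is exactly the standard measure-theoretic identity for recovering marginals from the joint CDF, and it transfers without modification from the univariate setting.
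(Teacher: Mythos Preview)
Your proposal is correct and is essentially the same argument as the paper's: both recover the marginal CDF as the limit of the joint CDF when the remaining coordinates tend to $+\infty$, apply the joint FSD inequality along the way, and pass to the limit. Your write-up is slightly more explicit about the monotone-convergence step (using the sequence $\mathbf{v}^{(n)}$), but the substance is identical.
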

\begin{proof}
First, note that for any random vector $\mathbf{X}$,

\begin{equation}
    F_{X_i}(x) = \lim_{\mathbf{x}_{-i} \to \infty} F_{\mathbf{X}_{-i}, X_i}(\mathbf{x}_{-i}, x_i).
\end{equation}

Thus, when $\mathbf{X} \fsde \mathbf{Y}$, then $\forall \mathbf{v} \in \mathbb{R}^d$:
\begin{align}
    \implies & F_\mathbf{X}(\mathbf{v}) \leq  F_\mathbf{Y}(\mathbf{v}) \\
    \implies & \lim_{\mathbf{v}_{-i} \to \infty} F_{\mathbf{X}_{-i}, X_i}(\mathbf{v}_{-i}, v_i) \\
    & \leq \lim_{\mathbf{v}_{-i} \to \infty} F_{\mathbf{Y}_{-i}, Y_i}(\mathbf{v}_{-i}, v_i) \\
    \implies & \forall i \in [d]: F_{X_i}(v_i) \leq F_{Y_i}(v_i) \\
    \implies & \forall i \in [d]: X_i \fsde Y_i. \qedhere
\end{align}
\end{proof}

Using \cref{lemma:u-implies-fsd,lemma:cdf-marginals-fsd} we can show the required result. We repeat Theorem 3.1 first and present the proof immediately after.
\begin{theorem31}
Let $\mathbf{X}$ and $\mathbf{Y}$ be d-dimensional random vectors. Then,
\begin{equation*}
    \forall u \in \mathcal{U}: \E u(\mathbf{X}) > \E u(\mathbf{Y}) \implies \mathbf{X} \du \mathbf{Y}.
\end{equation*}
\end{theorem31}

\begin{proof}
We know from \cref{lemma:u-implies-fsd} that,
\begin{equation}
\forall u \in \mathcal{U}: \E u(\mathbf{X}) > \E u(\mathbf{Y}) \implies \mathbf{X} \fsde \mathbf{Y}.
\end{equation}
Therefore, we need only show that $\exists i \in [d]: X_i \fsd Y_i$. 

We define a utility function $u$ that is a sum of univariate strictly monotonically increasing utility functions where each term only takes one variable from the vector into account, i.e.,
\begin{equation}
u(\mathbf{x}) = u_1(x_1) + \dotsb + u_d(x_d).
\end{equation}

Then,

\begin{align}
& \E u(\mathbf{X}) > \E u(\mathbf{Y}) \\
\implies & \int_{\mathbf{X}}u(\mathbf{x}) f_{\mathbf{X}}(\mathbf{x}) d\mathbf{x} > \int_{\mathbf{Y}}u(\mathbf{y}) f_{\mathbf{Y}}(\mathbf{y}) d\mathbf{y} \\
\implies & \sum_{i=1}^d \int_{-\infty}^{+\infty}\int_{\mathbf{X_{-i}}}u_i(x_i) f_{\mathbf{X}}(\mathbf{x_{-i}}, x_i)d\mathbf{x_{-i}}dx_{i} \\
& > \sum_{i=1}^d \int_{-\infty}^{+\infty}\int_{\mathbf{Y_{-i}}}u_i(y_i)f_{\mathbf{Y}}(\mathbf{y_{-i}}, y_i)d\mathbf{y_{-i}}dy_{i}\\
\implies & \sum_{i=1}^d \int_{-\infty}^{+\infty} u_i(x_i)  \int_{\mathbf{X_{-i}}}f_{\mathbf{X}}(\mathbf{x_{-i}}, x_i)d\mathbf{x_{-i}}dx_{i} \\
& > \sum_{i=1}^d \int_{-\infty}^{+\infty} u_i(y_i) \int_{\mathbf{Y_{-i}}} f_{\mathbf{Y}}(\mathbf{y_{-i}}, y_i)d\mathbf{y_{-i}}dy_{i}\\
\implies & \sum_{i=1}^d \int_{-\infty}^{+\infty}u_i(x_i) f_{X_i}(x_i)dx_{i} \\
& > \sum_{i=1}^d \int_{-\infty}^{+\infty}u_i(y_i) f_{Y_i}(y_i)dy_{i}
\end{align}

Observe that \cref{lemma:cdf-marginals-fsd} guarantees that,
\begin{equation}
    \mathbf{X} \fsde \mathbf{Y} \implies \forall i \in [d]: X_i \fsde Y_i.
\end{equation}
As such, the final implication is only true when $\exists i \in [d]: X_i \fsd Y_i$.
\end{proof}

The second deferred proof from Section 3 showed that distributional dominance implies strictly greater expected utility for the class of multivariate risk-averse decision makers.
\begin{theorem32}
Let $\mathbf{X}$ and $\mathbf{Y}$ be two-dimensional random vectors. Then $\forall u \in \mathcal{U}$ with $\frac{\partial^2 u(x_1, x_2)}{\partial x_1 \partial x_2} \leq 0$,
\begin{equation*}
     \mathbf{X} \du \mathbf{Y} \implies \E u(\mathbf{X}) > \E u(\mathbf{Y}) .
\end{equation*} 
\end{theorem32}

\begin{proof}
By definition, $\mathbf{X} \du \mathbf{Y} \implies \mathbf{X} \fsde \mathbf{Y}$. For this condition, Hayes et al.~\shortcite{hayes2022expected} show that,
\begin{equation}
\label{eq:hayes-result}
\E u(\mathbf{X}) - \E u(\mathbf{Y}) \geq - \int_{-\infty}^{+\infty} \lim_{t \to + \infty} \frac{\partial u(t, z)}{\partial z} \Delta_F(t,z) dz.
\end{equation}
Where $\Delta_F(t,z) = F_{\mathbf{X}}(t,z) - F_{\mathbf{Y}}(t,z)$. Without loss of generality, let us assume that $X_2 \fsd Y_2$, i.e. $\exists z \in \mathbb{R}: F_{X_2}(z) < F_{Y_2}(z)$. Then,
\begin{align}
& \E u(\mathbf{X}) - \E u(\mathbf{Y}) \\
& \geq - \int_{-\infty}^{+\infty} \lim_{t \to + \infty} \frac{\partial u(t, z)}{\partial z} \Delta_F(t,z) dz \\
& = - \int_{-\infty}^{+\infty} \left(\lim_{t \to + \infty} \frac{\partial u(t, z)}{\partial z}\right) \Delta_{F_{2}}(z) dz \\
& > 0. \qedhere
\end{align}
\end{proof}

\subsection{Distributional Undominated Set}
Based on the distributional dominance criterion, we define the Distributional Undominated Set (DUS) in Definition 4.1. To show that this set is a superset of the Pareto front, we restate Lemma 4.1 below and subsequently present the missing proof.

\begin{lemma41}
For all policies $\pi, \pi' \in \Pi$,
\begin{equation*}
    \zpi \du \zpip \implies \vpi \pd \vpip.
\end{equation*}
\end{lemma41}
\begin{proof}
Let $\zpi$ and $\zpip$ be two return distributions and let $\zpi \du \zpip$. We can expand the terms as follows,
\begin{equation}
    \zpi = \begin{bmatrix}
       Z_1^{\pi} \\[0.3em]
       \vdots \\[0.3em]
       Z_d^{\pi}
     \end{bmatrix} \quad \text{and} \quad
     \zpip = \begin{bmatrix}
       Z_1^{\pi'} \\[0.3em]
       \vdots \\[0.3em]
       Z_d^{\pi'}
     \end{bmatrix},
\end{equation}
with $d$ the number of objectives. We know that because $\zpi \du \zpip$, 
\begin{equation}
\forall i \in [d]: Z_i^\pi \fsde Z_i^{\pi'} \land \exists i \in [d]: Z_i^\pi \fsd Z_i^{\pi'}.
\end{equation}
The following can then be shown to hold \cite{hayes2022expected}:
\begin{equation}
\forall i \in [d]: Z_i^\pi \fsde Z_i^{\pi'} \implies \forall i: \E[Z_i^\pi] \geq \E[Z_i^{\pi'}].
\end{equation} 
This ensures that $\vpi \pde \vpip$. Let us denote the $i$ for which strict first-order stochastic dominance holds as $j$. For strict Pareto dominance, observe that
\begin{align}
    \E\left[Z_j^\pi\right] & = \int_0^{+\infty}(1 - F_{Z_j^\pi}(v))dv \\
    \E\left[Z_j^{\pi'}\right] & = \int_0^{+\infty}(1 - F_{Z_j^{\pi'}}(v))dv.
\end{align}
As $F_{Z_j^\pi}(v) \leq F_{Z_j^{\pi'}}(v)$ for all $v$ and strictly less for some $v$, we can say that,
\begin{align}
    & \int_0^{+\infty}(1 - F_{Z_j^\pi}(v))dv > \int_0^{+\infty}(1 - F_{Z_j^{\pi'}}(v))dv \\
    & \implies \E\left[Z_j^\pi\right] > \E\left[Z_j^{\pi'}\right]. \qedhere
\end{align}
\end{proof}

It is then a simple corollary that the Pareto front is a subset of the DUS.

\begin{corollary411}
For any family of policies $\Pi$ the Pareto front is a subset of the distributional dominance set, i.e., 
\begin{equation*}
    \pf \subseteq \duset.
\end{equation*}
\end{corollary411}
\begin{proof}
Assume that there exists a $\pi \in \pf$ such that $\pi \notin \duset$. As $\pi \notin \duset$, we know that,
\begin{equation}
\exists \pi' \in \Pi, \zpip \du \zpi.
\end{equation}
Lemma 4.1 implies then that $\vpip \pd \vpi$. As $\pi$ is Pareto dominated by $\pi'$, $\pi \notin \pf$, leading to a contradiction.
\end{proof}

\subsection{Convex Distributional Undominated Set}
Our final theoretical contributions introduce the Convex Distributional Undominated Set (CDUS). We first extend a well-known result from univariate first-order stochastic dominance to our setting \cite{fishburn1974convex}.
\begin{theorem51}
Let $\{\mathbf{X}_1, \dotsc, \mathbf{X}_n\}$ and $\{\mathbf{Y}_1, \dotsc, \mathbf{Y}_n\}$ be sets of two-dimensional random vectors. Then,
\begin{equation*}
    \exists \lambda \in \Delta^n: \sum_{i=1}^n \lambda_i \mathbf{X}_i \du \sum_{i=1}^n \lambda_i \mathbf{Y}_i,
\end{equation*}
implies that $\forall u \in \mathcal{U}$ with $\frac{\partial^2 u(x_1, x_2)}{\partial x_1 \partial x_2} \leq 0$,
\begin{equation*}
    \exists i \in [n]: \E u(\mathbf{X}_i) > \E u(\mathbf{Y}_i).
\end{equation*}
\end{theorem51}
\begin{proof}
Assume that given $\{\lambda_1, \dotsc, \lambda_n\}$, the condition holds for $\{\mathbf{X}_1, \dotsc, \mathbf{X}_n\}$ and $\{\mathbf{Y}_1, \dotsc, \mathbf{Y}_n\}$. By Theorem 3.2 and linearity of expectation, we can state that $\forall u \in \mathcal{U}$ with $\frac{\partial^2 u(x_1, x_2)}{\partial x_1 \partial x_2} \leq 0$,
\begin{align}
& \E u\left(\sum_{i=1}^n \lambda_i \mathbf{X}_i \right) > \E u \left( \sum_{i=1}^n \lambda_i \mathbf{Y}_i \right) \\
\implies & \sum_{i=1}^n \lambda_i \E u\left(\mathbf{X}_i \right) > \sum_{i=1}^n \lambda_i \E u\left(\mathbf{Y}_i \right) \\
\implies & \exists i \in [n]: \E u(\mathbf{X}_i) > \E u(\mathbf{Y}_i). \qedhere
\end{align}
\end{proof}

Finally, we show the simple corollary that the convex hull is a subset of the convex distributional undominated set.
\begin{corollary511}
For any family of policies $\Pi$,
\begin{equation*}
    \text{CH}(\Pi) \subseteq \cduset.
\end{equation*}
\end{corollary511}
\begin{proof}
Assume that there exists a policy $\pi$ which is in the convex hull but not in the convex distributional dominance set. This implies there exists a set of weights over policies in the CDUS which dominates $\pi$. Then,
\begin{align}
    & \sum_{i=1}^{|\Pi|} \lambda_i \mathbf{Z}^{\pi_i} \du \zpi \\
    \implies & \E \left[\sum_{i=1}^{|\Pi|} \lambda_i \mathbf{Z}^{\pi_i} \right] \pd \E \left[\zpi\right] \\
    \implies & \sum_{i=1}^{|\Pi|} \lambda_i \E \left[\mathbf{Z}^{\pi_i} \right] \pd \E \left[\zpi\right] \\
    \implies & \sum_{i=1}^{|\Pi|} \lambda_i \mathbf{V}^{\pi_i} \pd \vpi.
\end{align}
As such, $\pi$ would also not be in the convex hull. This leads to a contradiction, thus concluding the proof.
\end{proof}

\section{Additional Results}
Throughout this work, we intended to prove results that were as general as possible. In many cases, however, it was necessary to introduce strong conditions to arrive at a result. In this section, we present simple propositions that demonstrate why weaker conditions may fail. To the best of our knowledge, these results have not been published previously and we hope that future researchers can use them to avoid exploring impossible results.

\subsection{First-Order Stochastic Dominance}
First, we show that strict first-order stochastic dominance does not guarantee that one of the marginals strictly first-order stochastic dominates their related marginal. This motivates the need to define distributional dominance as a separate dominance criterion for multivariate distributions.

\begin{proposition}
\label{prop:cdf-marginals-fsd}
Let $\mathbf{X}$ and $\mathbf{Y}$ be d-dimensional random vectors. Then,
\begin{equation*}
\mathbf{X} \fsd \mathbf{Y} \centernot\implies \exists i \in [d]: X_i \fsd Y_i.
\end{equation*}
\end{proposition}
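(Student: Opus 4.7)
The statement is a negation, so I would refute the implication by exhibiting an explicit counterexample. The plan is to construct two bivariate random vectors with identical marginals (so that neither component can strictly first-order stochastically dominate the other) but whose joint CDFs are ordered in the strict FSD sense. This is possible precisely because the joint distribution is not pinned down by its marginals: different copulas can produce different joint CDFs while leaving the marginals fixed.

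Concretely, I would take $d = 2$ and let $U \sim \mathrm{Bernoulli}(1/2)$. Define $\mathbf{Y} = (U, U)$, so that the two components are comonotonic, and let $\mathbf{X} = (U_1, U_2)$ with $U_1, U_2$ independent copies of $U$. By construction, $X_1, X_2, Y_1, Y_2$ all have the same $\mathrm{Bernoulli}(1/2)$ distribution, so $F_{X_i} = F_{Y_i}$ for $i = 1, 2$; in particular no marginal can satisfy $X_i \fsd Y_i$. This takes care of the right-hand side of the non-implication.

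It then remains to verify $\mathbf{X} \fsd \mathbf{Y}$. I would do a straightforward case analysis on $\mathbf{v} = (v_1, v_2) \in \mathbb{R}^2$, using
\begin{equation*}
F_\mathbf{X}(v_1, v_2) = F_{X_1}(v_1)\, F_{X_2}(v_2), \qquad F_\mathbf{Y}(v_1, v_2) = F_U(\min(v_1, v_2)).
\end{equation*}
Splitting on whether each $v_i$ lies in $(-\infty, 0)$, $[0, 1)$, or $[1, +\infty)$ gives at most nine regions; in every region the two CDFs are either both $0$, both $1/2$, both $1$, or $F_\mathbf{X} = 1/4 < 1/2 = F_\mathbf{Y}$ (the latter occurring precisely when $v_1, v_2 \in [0, 1)$). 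Hence $F_\mathbf{X}(\mathbf{v}) \leq F_\mathbf{Y}(\mathbf{v})$ for all $\mathbf{v}$, with strict inequality at $\mathbf{v} = (0, 0)$, so $\mathbf{X} \fsde \mathbf{Y}$ and the inequality is strict somewhere, giving $\mathbf{X} \fsd \mathbf{Y}$.

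There is no real technical obstacle here; the only thing worth getting right is bookkeeping in the case analysis, and conceptually the key point is to pick $\mathbf{X}$ and $\mathbf{Y}$ as the product coupling and the comonotonic coupling of the same marginals — a standard trick for separating joint from marginal stochastic orders — so that the joint CDFs differ only through their dependence structure while every marginal comparison collapses to equality.
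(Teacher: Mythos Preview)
Your proposal is correct and follows essentially the same approach as the paper: exhibit two bivariate distributions with identical marginals (so no marginal strict FSD is possible) but with joint CDFs ordered strictly in the FSD sense. The paper uses an ad hoc pair supported on $\{2,4\}^2$, whereas your product-versus-comonotonic coupling of $\mathrm{Bernoulli}(1/2)$ marginals makes the copula mechanism more transparent, but the underlying idea is the same.
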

\begin{proof}
Consider the following distributions,

\begin{equation}
\label{eq:equal-marginals}
\begin{split}
& \mathbf{X} = \left \{P(2, 4) = \frac{2}{3}, P(4, 2) = \frac{1}{3} \right \}\\\
& \mathbf{Y} = \left \{P(2, 2) = \frac{1}{3}, P(2, 4) = \frac{1}{3}, P(4, 4) = \frac{1}{3}\right \}.
\end{split}
\end{equation}

Then, $\mathbf{X} \fsd \mathbf{Y}$. However, when looking at the marginal CDFs we see the following:

\begin{equation}
F_{X_1}(2) = F_{Y_1}(2) = \frac{2}{3} \quad F_{X_1}(4) = F_{Y_1}(4) = 1.
\end{equation}

\begin{equation}
F_{X_2}(2) = F_{Y_2}(2) = \frac{1}{3} \quad F_{X_2}(4) = F_{Y_2}(4) = 1.
\end{equation}

And thus $F_{X_1} = F_{Y_1}$ and $F_{X_2} = F_{Y_2}$. As such, $\nexists i \in [d]: X_i \fsd Y_i$.
\end{proof}

Next, we show that contrary to univariate distributions, first-order stochastic dominance does not guarantee a greater or equal utility for all (strictly) increasing utility functions.
\begin{proposition}
\label{prop:fsd-implies-u}
Let $\mathbf{X}$ and $\mathbf{Y}$ be d-dimensional random vectors. Then,
\begin{equation*}
\mathbf{X} \fsde \mathbf{Y} \centernot\implies \forall u \in \mathcal{U}: \E u(\mathbf{X}) \geq \E u(\mathbf{Y}).
\end{equation*}
\end{proposition}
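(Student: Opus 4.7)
The plan is to prove the non-implication by exhibiting an explicit counterexample. Concretely, I would produce a pair of $d$-dimensional random vectors $\mathbf{X}$ and $\mathbf{Y}$ with $\mathbf{X} \fsde \mathbf{Y}$ together with a utility function $u \in \mathcal{U}$ for which $\E u(\mathbf{X}) < \E u(\mathbf{Y})$.

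The natural candidates are the bivariate distributions already used in Proposition B.1, which are designed to have identical marginals despite exhibiting strict joint FSD:
\begin{equation*}
\mathbf{X} = \left\{P(2,4) = \tfrac{2}{3},\ P(4,2) = \tfrac{1}{3}\right\}, \quad
\mathbf{Y} = \left\{P(2,2) = \tfrac{1}{3},\ P(2,4) = \tfrac{1}{3},\ P(4,4) = \tfrac{1}{3}\right\}.
\end{equation*}
Proposition B.1 already establishes $\mathbf{X} \fsd \mathbf{Y}$, hence in particular $\mathbf{X} \fsde \mathbf{Y}$, so the first half of the counterexample is obtained for free.

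For the second half, I would pair these distributions with the product utility $u(x_1, x_2) = x_1 x_2$, which is strictly monotonically increasing on the positive orthant in which the supports lie. A direct calculation yields $\E u(\mathbf{X}) = \tfrac{2}{3}\cdot 8 + \tfrac{1}{3}\cdot 8 = 8$, whereas $\E u(\mathbf{Y}) = \tfrac{1}{3}\cdot 4 + \tfrac{1}{3}\cdot 8 + \tfrac{1}{3}\cdot 16 = \tfrac{28}{3} > 8$, so $u$ witnesses that the implication fails.

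The main obstacle is conceptual rather than computational: one must pick a utility function outside the class treated in Theorem 3.2, i.e.\ one with $\partial^2 u / (\partial x_1 \partial x_2) > 0$, since multivariate risk-aversion is precisely the curvature condition that salvages the univariate-style FSD-to-expected-utility implication. The product utility has mixed partial derivative equal to $1$, making it a canonical supermodular (risk-seeking) choice, and the construction therefore shows that in $d \geq 2$ dimensions strict monotonicity alone is genuinely insufficient to recover the classical one-dimensional equivalence.
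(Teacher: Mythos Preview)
Your approach is essentially the paper's own: both use the same pair of distributions from Proposition~B.1 and a product-type utility to witness the failure of the implication. The one substantive difference is that the paper takes $u(v_1,v_2)=\ln(1+e^{v_1})\cdot\ln(1+e^{v_2})$, a softplus-smoothed product, rather than the raw product $x_1 x_2$. This is not merely cosmetic: $\mathcal{U}$ in Definition~2.3 consists of functions strictly increasing on \emph{all} of $\mathbb{R}^d$, and $x_1 x_2$ fails this globally (for instance $(-1,2)\succ_{\mathrm p}(-1,1)$ yet $u(-1,2)=-2<-1=u(-1,1)$). Your remark that the product is monotone ``on the positive orthant in which the supports lie'' does not place it in $\mathcal{U}$ as the paper defines that class. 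The softplus factors are strictly positive and strictly increasing on all of $\mathbb{R}$, so their product is genuinely in $\mathcal{U}$, and the inequality still holds ($\E u(\mathbf{X})\approx 8.55 < \E u(\mathbf{Y})\approx 9.74$). Your computations are correct and the fix is routine---compose each coordinate with any strictly increasing map $\mathbb{R}\to(0,\infty)$ before multiplying---but as written the chosen $u$ is not an element of $\mathcal{U}$.
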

\begin{proof}
Recall the same distributions as shown in \cref{eq:equal-marginals}. Let $u(v_1, v_2) = \ln{(e^0 + e^{v_1})} \cdot \ln{(e^0 + e^{v_2})}$ which is an element of $\mathcal{U}$ and a smooth and strictly increasing approximation of $\max(0, v_1) \cdot \max(0, v_2)$. Then $\E u(\mathbf{X}) \approx 8.55 $, while $\E u(\mathbf{Y}) \approx 9.74$.
\end{proof}

Note that the same proof can be used to show that $\mathbf{X} \fsd \mathbf{Y}$ does not imply a strictly greater expected utility. In fact, the stronger claim of multivariate risk aversion is not enough either as is demonstrated in the following proposition.

\begin{proposition}
\label{prop:fsd-implies-u-mra}
Let $\mathbf{X}$ and $\mathbf{Y}$ be two-dimensional random vectors. Then $\mathbf{X} \fsd \mathbf{Y}$ does not imply in general that $\forall u \in \mathcal{U}$ with 
 $\frac{\partial^2 u(x_1, x_2)}{\partial x_1 \partial x_2} \leq 0: \E u(\mathbf{X}) > \E u(\mathbf{Y})$.
\end{proposition}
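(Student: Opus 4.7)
The plan is to furnish an explicit counterexample, reusing the pair of distributions already introduced in \cref{eq:equal-marginals} during the proofs of \cref{prop:cdf-marginals-fsd,prop:fsd-implies-u}. A direct check confirms that $\mathbf{X} \fsd \mathbf{Y}$, with strict inequality of the joint CDFs at $(2,2)$ (namely $F_{\mathbf{X}}(2,2) = 0 < \tfrac{1}{3} = F_{\mathbf{Y}}(2,2)$), while both pairs of marginals coincide: $F_{X_1} = F_{Y_1}$ and $F_{X_2} = F_{Y_2}$. So strict first-order stochastic dominance on the joint distribution is already achieved here without any strengthening at the marginals, and the plan is to exploit exactly this asymmetry.

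The natural candidate utility is then one that sees the joint distribution only through its marginals. I would take the additive linear utility $u(x_1, x_2) = x_1 + x_2$ and verify the two required hypotheses: first, $u \in \mathcal{U}$, since any strict Pareto improvement of the argument strictly increases the sum; and second, $\frac{\partial^2 u}{\partial x_1 \partial x_2} = 0 \leq 0$, so $u$ lies in the class of multivariate risk-averse utility functions considered in \cref{th:dd-implies-u}.

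Applying linearity of expectation together with the equality of the marginals then yields
\begin{equation*}
\E u(\mathbf{X}) = \E[X_1] + \E[X_2] = \E[Y_1] + \E[Y_2] = \E u(\mathbf{Y}),
\end{equation*}
so $\E u(\mathbf{X})$ is \emph{not} strictly greater than $\E u(\mathbf{Y})$, and the implication fails for this $u$. There is no substantive technical obstacle; the conceptual point worth emphasising is that strict joint FSD at a single point need not translate into any improvement of the marginals, whereas the strengthening to distributional dominance $\du$ additionally forces at least one marginal to strictly FSD and thus rules out such degenerate cases. This is precisely why the present proposition is consistent with \cref{th:dd-implies-u} rather than contradicting it.
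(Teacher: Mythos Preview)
Your proposal is correct and follows essentially the same approach as the paper: the same counterexample distributions from \cref{eq:equal-marginals} together with the additive utility $u(x_1,x_2)=x_1+x_2$, for which $\E u(\mathbf{X})=\E u(\mathbf{Y})=6$. Your presentation adds a bit of useful context (the explicit point $(2,2)$ where the joint CDFs differ, and the observation that equality of expected utility follows directly from equality of the marginals), but the argument is the paper's.
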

\begin{proof}
Recall the same distributions as shown in \cref{eq:equal-marginals}. Let $u(v_1, v_2) = v_1 + v_2$ and note that $u \in \mathcal{U}$ and that  $\frac{\partial^2 u(x_1, x_2)}{\partial x_1 \partial x_2} = 0$. Then $\E u(\mathbf{X}) = 6$ and $\E u(\mathbf{Y}) = 6$.
\end{proof}

Finally, we show that strict first-order stochastic dominance and distributional dominance are equivalent in the case of random vectors with independent random variables.

\begin{proposition}
Let $\mathbf{X}$ and $\mathbf{Y}$ be d-dimensional random vectors such that $P(X_1=x_1, \dotsc, X_d = x_d) = P(X_1=x_1) \dotsm P(X_d = x_d)$ and $P(Y_1=y_1, \dotsc, Y_d = y_d) = P(Y_1=y_1) \dotsm P(Y_d = y_d)$. Then,
\begin{equation*}
    \mathbf{X} \du \mathbf{Y} \iff \mathbf{X} \fsd \mathbf{Y}.
\end{equation*}
\end{proposition}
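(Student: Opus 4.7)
The plan is to prove both directions using the fact that under independence, the joint CDF factorises as $F_{\mathbf{X}}(\mathbf{v}) = \prod_{i=1}^d F_{X_i}(v_i)$ and likewise for $\mathbf{Y}$. In both directions, the joint FSD condition $\mathbf{X} \fsde \mathbf{Y}$ is common (it is part of $\du$ by definition, and part of $\fsd$), and \cref{lemma:cdf-marginals-fsd} gives $X_i \fsde Y_i$ for all $i$ whenever joint FSD holds, so the only thing that ever needs to be transferred between the two criteria is the location of the \emph{strict} inequality.

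For the forward direction ($\Rightarrow$), assume $\mathbf{X} \du \mathbf{Y}$, so $\mathbf{X} \fsde \mathbf{Y}$ and some marginal, say $X_j$, strictly FSDs $Y_j$; fix $v_j^\star$ with $F_{X_j}(v_j^\star) < F_{Y_j}(v_j^\star)$. For each $i \neq j$, choose $v_i^\star$ large enough that $F_{X_i}(v_i^\star) > 0$ (possible since CDFs tend to $1$); since we also have $F_{X_i}(v_i^\star) \leq F_{Y_i}(v_i^\star)$, the product $\prod_{i \neq j} F_{X_i}(v_i^\star)$ is a strictly positive number that is bounded above by $\prod_{i \neq j} F_{Y_i}(v_i^\star)$. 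Multiplying the strict inequality at coordinate $j$ by this positive quantity and then using monotonicity of the remaining factors gives $F_{\mathbf{X}}(\mathbf{v}^\star) < F_{\mathbf{Y}}(\mathbf{v}^\star)$, which together with joint FSD yields $\mathbf{X} \fsd \mathbf{Y}$.

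For the reverse direction ($\Leftarrow$), assume $\mathbf{X} \fsd \mathbf{Y}$ and argue by contraposition: suppose no marginal strictly FSDs, i.e., $X_i \not\fsd Y_i$ for every $i$. Combined with $X_i \fsde Y_i$ from \cref{lemma:cdf-marginals-fsd}, this forces $F_{X_i}(v) = F_{Y_i}(v)$ pointwise for each $i$. Independence then gives $F_{\mathbf{X}}(\mathbf{v}) = \prod_i F_{X_i}(v_i) = \prod_i F_{Y_i}(v_i) = F_{\mathbf{Y}}(\mathbf{v})$ for all $\mathbf{v} \in \mathbb{R}^d$, contradicting the strictness clause of $\mathbf{X} \fsd \mathbf{Y}$. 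Hence some marginal must strictly FSD, giving $\mathbf{X} \du \mathbf{Y}$. The only real subtlety is in the forward direction, where one must ensure the auxiliary factors $\prod_{i \neq j} F_{X_i}(v_i^\star)$ can be made strictly positive; this is the main obstacle, but it is resolved by the simple observation that CDFs approach $1$ in the upper orthant.
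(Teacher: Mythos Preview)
Your proof is correct in both directions. The reverse direction $(\Leftarrow)$ is essentially identical to the paper's: both argue by contraposition that if no marginal is strictly dominant then, combined with \cref{lemma:cdf-marginals-fsd}, all marginal CDFs coincide, and independence forces the joint CDFs to coincide, contradicting the strict clause of $\fsd$.

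The forward direction $(\Rightarrow)$ is where you and the paper genuinely diverge. You exploit the factorisation $F_{\mathbf{X}}(\mathbf{v})=\prod_i F_{X_i}(v_i)$ to build an explicit witness $\mathbf{v}^\star$ at which the joint inequality is strict: take $v_j^\star$ where the $j$th marginal is strict, push the other coordinates far enough right that their CDF factors are positive, and compare products. The paper instead argues by contradiction without using independence at all: if no $\mathbf{v}$ gave a strict joint inequality, then $F_{\mathbf{X}}\equiv F_{\mathbf{Y}}$, and taking limits $\mathbf{v}_{-i}\to\infty$ yields $F_{X_i}\equiv F_{Y_i}$ for every $i$, contradicting the assumed strict marginal dominance. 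The paper's route therefore establishes the stronger fact that $\du\implies\fsd$ holds for arbitrary (not necessarily independent) random vectors, a point the authors remark on explicitly. Your construction is perfectly valid under the stated hypotheses and arguably more concrete, but it obscures that independence is only needed for the $(\Leftarrow)$ direction.
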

\begin{proof}
$(\implies)$ First, by definition $\mathbf{X} \du \mathbf{Y} \implies \mathbf{X} \fsde \mathbf{Y}$ and $\exists i \in [d]: X_i \fsd Y_i$. Assume that there does not exist a $\mathbf{v}$ such that $F_{\mathbf{X}}(\mathbf{v}) < F_{\mathbf{Y}}(\mathbf{v})$. Then $\forall \mathbf{v} \in \mathbb{R}^d: F_{\mathbf{X}}(\mathbf{v}) = F_{\mathbf{Y}}(\mathbf{v})$. By our assumption, 
\begin{equation}
\begin{split}
\forall v_i \in \mathbb{R}: & \lim_{\mathbf{v}_{-i} \to \infty} F_{\mathbf{X}_{-i}, X_i}(\mathbf{v}_{-i}, v_i) \\
& = \lim_{\mathbf{v}_{-i} \to \infty} F_{\mathbf{Y}_{-i}, Y_i}(\mathbf{v}_{-i}, v_i),
\end{split}
\end{equation}
leading to a contradiction. It is interesting to note that the implication does not require the random variables in the vectors to be independent.


$(\impliedby)$ Finally, the assumption that all marginals are independent allows us to state that $\forall \mathbf{v} \in \mathbb{R}^d: F_{X_1}(v_1) \dotsm F_{X_d}(v_d) \leq F_{Y_1}(v_1) \dotsm F_{Y_d}(v_d)$ and that $\exists \mathbf{v} \in \mathbb{R}^d: F_{X_1}(v_1) \dotsm F_{X_d}(v_d) < F_{Y_1}(v_1) \dotsm F_{Y_d}(v_d)$. As such, it is necessary that $\exists i \in [d]: X_i \fsd Y_i$.
\end{proof}

\subsection{Relation Between Solution Sets}
In \cref{sec:solution-set} we define a taxonomy between the relevant solution sets in multi-objective decision making. We noted that in general, the Pareto front is neither a subset nor a superset of the convex distributional undominated set. Here, we demonstrate examples of this fact.

\begin{proposition}
\label{prop:pf-subset-cduset}
For any family of policies $\Pi$, it is not true in general that
\begin{equation*}
\pf \subseteq \cduset.
\end{equation*}
\end{proposition}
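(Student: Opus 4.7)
The plan is to exhibit an explicit three-policy counterexample in which one policy lies on the Pareto front yet is distributionally dominated by a convex mixture of the other two. I would define $\mathbf{Z}^{\pi_1}$ to be a point mass at $(0,10)$, $\mathbf{Z}^{\pi_2}$ a point mass at $(10,0)$, and $\mathbf{Z}^{\pi_3}$ the distribution placing probability $1/2$ on each of $(-1,10)$ and $(10,-1)$. The expected returns are $(0,10)$, $(10,0)$, and $(4.5,4.5)$. Since $(0,10)$ and $(10,0)$ each fail Pareto comparison with $(4.5,4.5)$ on exactly one coordinate, $\pi_3$ is not Pareto dominated within $\Pi=\{\pi_1,\pi_2,\pi_3\}$ and so $\pi_3 \in \pf$.

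Next, I would take $\lambda = (1/2,1/2,0)$ and form the mixture distribution $\tilde{\mathbf{Z}} = \tfrac{1}{2}\mathbf{Z}^{\pi_1} + \tfrac{1}{2}\mathbf{Z}^{\pi_2}$, which places mass $1/2$ at each of $(0,10)$ and $(10,0)$. To verify $\tilde{\mathbf{Z}} \du \mathbf{Z}^{\pi_3}$, I would exploit the observation that the support points of $\tilde{\mathbf{Z}}$ componentwise dominate those of $\mathbf{Z}^{\pi_3}$, namely $(0,10) \pd (-1,10)$ and $(10,0) \pd (10,-1)$. Writing both joint CDFs as weighted sums of upper-orthant indicators, the orthant inclusion $\{\mathbf{v}: (0,10) \pdef \mathbf{v}\} \subseteq \{\mathbf{v}: (-1,10) \pdef \mathbf{v}\}$, together with the analogous inclusion for the second pair, immediately yields $F_{\tilde{\mathbf{Z}}}(\mathbf{v}) \leq F_{\mathbf{Z}^{\pi_3}}(\mathbf{v})$ for every $\mathbf{v}\in\mathbb{R}^2$, with strict inequality at, for instance, $\mathbf{v} = (-1,10)$. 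For the marginal condition in Definition 3.2, the first marginal of $\tilde{\mathbf{Z}}$ places mass at $0$ and $10$ while that of $\mathbf{Z}^{\pi_3}$ places mass at $-1$ and $10$, yielding strict univariate FSD on that coordinate; the second coordinate is symmetric. Hence $\tilde{\mathbf{Z}} \du \mathbf{Z}^{\pi_3}$, so $\pi_3 \notin \cduset$, which establishes $\pf \not\subseteq \cduset$.

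The main obstacle is making the pointwise CDF comparison exhaustive over all of $\mathbb{R}^2$ rather than verifying it only at the finite set of jump points; the orthant-inclusion argument circumvents any case analysis by reducing the comparison to a set-theoretic containment of upper sets. A subsidiary concern is confirming the \emph{strict} marginal condition of distributional dominance (not merely joint strict FSD), which is handled directly by comparing the univariate step functions on each axis and observing that they differ on $[-1,0)$.
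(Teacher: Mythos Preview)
Your proposal is correct and follows essentially the same construction as the paper: both exhibit three policies where $\pi_1,\pi_2$ are point masses at Pareto-incomparable corners and $\pi_3$ is a $50/50$ distribution whose two atoms are each Pareto-dominated by the corresponding corner, so that $\pi_3$ lies on the Pareto front yet is distributionally dominated by the mixture $\tfrac12\mathbf{Z}^{\pi_1}+\tfrac12\mathbf{Z}^{\pi_2}$. Your orthant-inclusion (coupling) argument for the joint CDF inequality is a slightly more explicit justification than the paper provides, but the underlying idea and counterexample structure are the same.
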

\begin{proof}
Define the following return distributions:
\begin{equation}
\begin{split}
& \mathbf{Z}^{\pi_1} = \left \{P(1, 5) = 1 \right \}\\
& \mathbf{Z}^{\pi_2} = \left \{P(5, 1) = 1 \right \}\\
& \mathbf{Z}^{\pi_3} = \left \{P(1, 3) = \frac{1}{2}, P(3, 1) = \frac{1}{2} \right \}.
\end{split}
\end{equation}
Then the expected value of these distributions is as follows:
\begin{equation}
\begin{split}
& \mathbf{V}^{\pi_1} = (1, 5)\\
& \mathbf{V}^{\pi_2} = (5, 1)\\
& \mathbf{V}^{\pi_3} = (2, 2).
\end{split}
\end{equation}
Then $\pi_1, \pi_2, \pi_3 \in \pf$. Observe, that
\begin{equation}
\frac{1}{2} \mathbf{Z}^{\pi_1} + \frac{1}{2} \mathbf{Z}^{\pi_2} \du \mathbf{Z}^{\pi_3},
\end{equation}
and therefore $\pi_3 \notin \cduset$.
\end{proof}

\begin{proposition}
\label{prop:cduset-subset-pf}
For any family of policies $\Pi$, it is not true in general that
\begin{equation*}
\cduset \subseteq \pf.
\end{equation*}
\end{proposition}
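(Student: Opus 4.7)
The plan is to construct a simple two-policy family $\Pi$ containing a policy that is Pareto dominated (hence excluded from $\pf$) yet undominated by any convex mixture in the distributional sense (hence included in $\cduset$). A natural candidate reuses the treatment-plan scenario already used for motivation: let $\pi_A$ and $\pi_B$ be policies whose return distributions are $\mathbf{Z}^{\pi_A}$, placing probability $\tfrac{1}{2}$ on each of $(1,0)$ and $(0,1)$, and $\mathbf{Z}^{\pi_B}$, a Dirac at $(0.45, 0.45)$. Since $\mathbf{V}^{\pi_A} = (0.5, 0.5) \pd (0.45, 0.45) = \mathbf{V}^{\pi_B}$, the policy $\pi_B$ is Pareto dominated by $\pi_A$, so $\pi_B \notin \pf$, which disposes of one half of the claim.

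For the other half I would rule out every convex mixture $\lambda \mathbf{Z}^{\pi_A} + (1-\lambda) \mathbf{Z}^{\pi_B}$ with $\lambda \in [0,1]$ as a distributional dominator of $\mathbf{Z}^{\pi_B}$. The key calculation evaluates CDFs at $\mathbf{v} = (1,0)$: the atom of $\mathbf{Z}^{\pi_A}$ at $(1,0)$ is the only atom in either support satisfying $\mathbf{v}' \pdef (1,0)$, giving the mixture CDF the value $\lambda/2$ there, while $F_{\mathbf{Z}^{\pi_B}}(1,0) = 0$. First-order stochastic dominance of $\mathbf{Z}^{\pi_B}$ by the mixture would therefore force $\lambda = 0$, but then the mixture coincides with $\mathbf{Z}^{\pi_B}$ itself, which cannot distributionally dominate itself because the criterion requires strict FSD on at least one marginal. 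Hence $\pi_B \in \cduset \setminus \pf$, establishing the proposition.

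The main obstacle is merely careful bookkeeping of the CDF values at the step points $(1,0)$, $(0,1)$ and $(0.45,0.45)$; no machinery beyond the definitions of first-order stochastic and distributional dominance is required. A symmetric three-policy variant could also work, but the asymmetric Dirac construction above is minimal and already suffices to refute the inclusion.
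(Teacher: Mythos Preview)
Your proposal is correct and follows essentially the same approach as the paper: exhibit a two-policy family in which one policy is Pareto dominated yet not distributionally dominated by any convex mixture, hence lies in $\cduset\setminus\pf$. The paper uses the pair $\mathbf{Z}^{\pi_1}=\delta_{(2,5)}$ and $\mathbf{Z}^{\pi_2}=\tfrac12\delta_{(1,5)}+\tfrac12\delta_{(3,3)}$ and simply asserts $\cduset=\{\pi_1,\pi_2\}$, whereas you reuse the motivating Example~\ref{exmp:utility} and verify CDUS membership via an explicit CDF evaluation at $(1,0)$; the underlying idea is the same.
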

\begin{proof}
Define the following return distributions:
\begin{equation}
\begin{split}
& \mathbf{Z}^{\pi_1} = \left \{P(2, 5) = 1 \right \}\\
& \mathbf{Z}^{\pi_2} = \left \{P(1, 5) = \frac{1}{2}, P(3, 3) = \frac{1}{2} \right \}.
\end{split}
\end{equation}
Then the expected value of these distributions is as follows:
\begin{equation}
\begin{split}
& \mathbf{V}^{\pi_1} = (2, 5)\\
& \mathbf{V}^{\pi_2} = (2, 4)
\end{split}
\end{equation}
and $\pf = \{\pi_1\}$ while $\cduset = \{\pi_1, \pi_2 \}$.
\end{proof}

\section{A Case Study}
We provide a brief case study to highlight the value of having the additional policies in the distributional undominated set compared to the Pareto front. For this purpose, we define two utility functions in that may be used by real-world decision makers.

The first utility function, shown in \cref{eq:prod}, is the product between the two objectives and is also used in \cref{exmp:utility}. Intuitively, a decision maker with this function aims to strike a balance between both objectives to maximise their utility. We call the decision maker with this utility function Decision Maker 1 (DM1).

\begin{equation}
\label{eq:prod}
    u(v_1, v_2) = v_1 \cdot v_2
\end{equation}

The second utility function, shown in \cref{eq:leontief}, is known as a Leontief utility function and is commonly used in the economic and game theoretic literature to represent the utility function of rational agents \cite{codenotti2007computation}. Here too it is clear that a decision maker needs to take both objectives into account as their utility will be derived from the minimum. We call the decision maker with this utility function Decision Maker 2 (DM2).

\begin{equation}
\label{eq:leontief}
    u(v_1, v_2) = \min(v_1, v_2)
\end{equation}

In \cref{fig:case-study} we show the expected returns of the policies learned using DIMOQ in a sample MOMDP. Conventional approaches in multi-objective reinforcement learning and planning focus their attention on obtaining either the Pareto front, indicated with a black line, or the convex hull, indicated with a grey line. Therefore, only policies in these sets would be presented to the decision makers and all remaining policies would be discarded a priori as they are considered suboptimal.


\begin{figure}[t]
     \centering
     \begin{subfigure}[b]{\columnwidth}
         \centering
         \includegraphics[width=\textwidth]{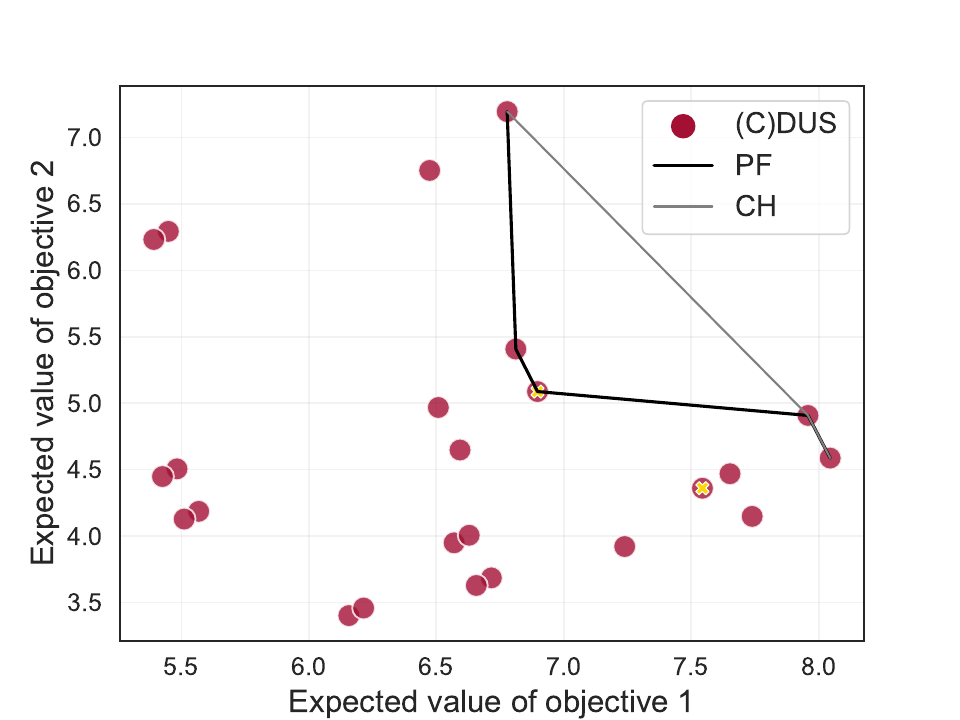}
         \caption{A decision maker with \cref{eq:prod}.}
         \label{fig:cs-1}
     \end{subfigure}
     \hfill
     \begin{subfigure}[b]{\columnwidth}
         \centering
         \includegraphics[width=\textwidth]{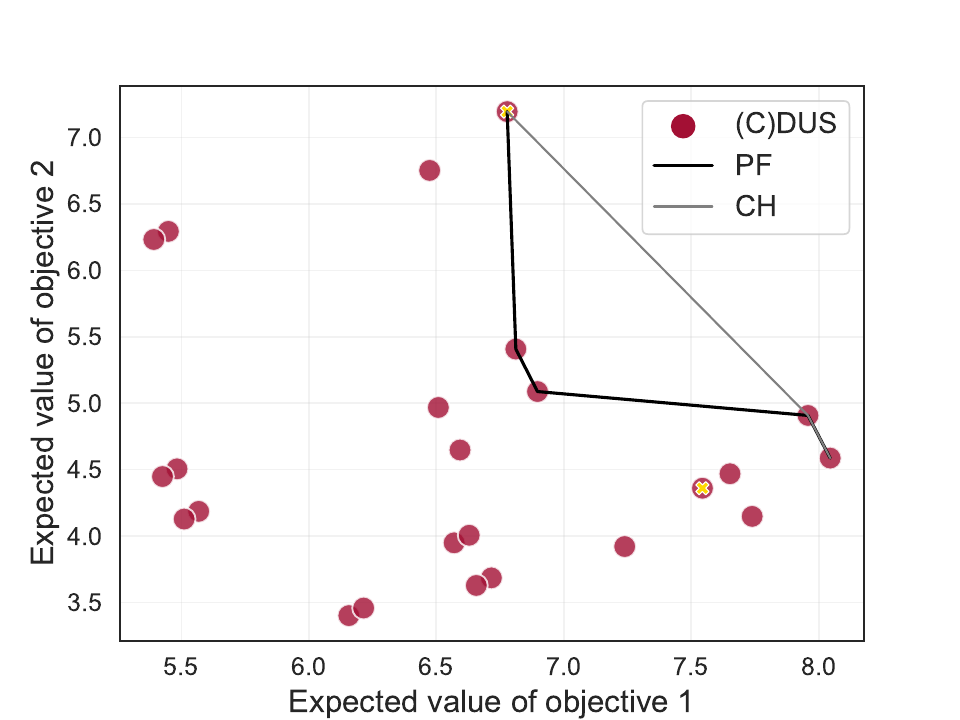}
         \caption{A decision maker with \cref{eq:leontief}.}
         \label{fig:cs-2}
     \end{subfigure}
     \hfill
        \caption{A case study comparing optimal policies in the Pareto front and distributional undominated set for two decision makers.}
        \label{fig:case-study}
\end{figure}

Let us first consider a decision support system that only presents options with a Pareto optimal expected value to the decision maker. In this case, the policy highlighted with a yellow cross on the black line in \cref{fig:cs-1} is optimal for DM1 and in \cref{fig:cs-2} for DM2. These policies lead to an expected utility of $34.87$ for the former and $5.37$ for the latter. Taking a distributional approach, however, better policies can be retrieved. We find that the policy in the Pareto dominated subset of the objective space indicated with a yellow cross is optimal for both decision makers. Concretely, DM1 obtains an expected utility of $49.59$ from this policy while DM2 obtains a utility of $6.49$. Moreover, it is clear that this policy is strictly preferred over all policies contained in the Pareto front or convex hull. Therefore, incorporating a distributional approach when computing a solution set to be used in a decision support system would significantly increase the value a decision maker could get from using the system.

\end{document}